\documentclass[sigconf]{acmart}
\usepackage{microtype}
\usepackage{etoolbox}


\usepackage{amsmath,eqnarray,mathtools}
\usepackage{amsfonts,bbm,bm,courier}
\usepackage{setspace,relsize}
\usepackage{graphicx,placeins}
\usepackage{xcolor}
\usepackage{enumitem}
\usepackage{caption}
\usepackage{hyperref}
\hypersetup{%
  colorlinks=true,
  urlcolor=blue,%
  linkcolor=blue,%
  citecolor=blue,%
  linkbordercolor=blue,
  pdfborderstyle={/S/U/W 1}
}
\usepackage{subcaption,afterpage}

\usepackage{todonotes}


\DeclareMathOperator*{\argmin}{argmin}
\DeclareMathOperator*{\argmax}{argmax}

\newcommand{\indic}[1]{\mathbbm{1}[#1]}

\newcommand{\R}{\mathbb{R}}

\newcommand{\Simplex}[1]{\mathbb{S}^{#1}}

\newcommand{\data}{\mathcal{D}}
\newcommand{\X}{\mathcal{X}}


\newcommand{\yhat}[1]{\hat{y}^{(#1)}}
\newcommand{\wb}{w}

\newcommand{\st}{\textnormal{s.t.}}
\newcommand{\miprange}[3]{{#1}={#2},...,{#3}}

\newcommand{\clf}[0]{h}
\newcommand{\ambiguity}[4]{A_{{#1},{#2}}({#3};{#4})}

\newcommand{\Hset}[0]{\mathcal{H}}
\newcommand{\epsset}[1]{\mathcal{R}_\epsilon({#1})}
\newcommand{\alphaset}[0]{\Hset_\alpha}


\usepackage{algorithm,algpseudocode}

\algnewcommand\algorithmicinput{\textbf{Input}}

\algnewcommand\algorithmicinitialize{\textbf{Initialize}}
\algnewcommand\algorithmicbigstep{\textbf{Step}}
\algnewcommand\INPUT{\item[\algorithmicinput]}
\algnewcommand\INITIALIZE{\item[\algorithmicinitialize]}
\algnewcommand{\STEP}[1]{\item[\algorithmicbigstep]{\textbf{#1}}}
\algnewcommand{\InputExplanation}[2][.6\linewidth]{\leavevmode\hfill\makebox[#1][r]{~{\footnotesize{#2}}}}
\algnewcommand{\InitializationExplanation}[2][.6\linewidth]{\leavevmode\hfill\makebox[#1][r]{~{\footnotesize{#2}}}}
\algnewcommand{\alginput}[2]{\INPUT{#1}\InputExplanation{#2}}
\algnewcommand{\StateComment}[2]{\State{#1}\InputExplanation{#2}}
\algnewcommand{\alginitialize}[2]{\Statex{#1}\InitializationExplanation{#2}}
\algrenewcommand\algorithmiccomment[2][]{#1\hfill\textit{\scriptsize{#2}}}

\newcommand{\thankssymb}[1]{\textsuperscript{#1}}

\AtBeginDocument{%
  \providecommand\BibTeX{{%
    \normalfont B\kern-0.5em{\scshape i\kern-0.25em b}\kern-0.8em\TeX}}}


\copyrightyear{2023}
\acmYear{2023}
\setcopyright{acmlicensed}\acmConference[FAccT '23]{2023 ACM Conference on Fairness, Accountability, and Transparency}{June 12--15, 2023}{Chicago, IL, USA}
\acmBooktitle{2023 ACM Conference on Fairness, Accountability, and Transparency (FAccT '23), June 12--15, 2023, Chicago, IL, USA}
\acmPrice{15.00}
\acmDOI{10.1145/3593013.3593998}
\acmISBN{979-8-4007-0192-4/23/06}

\settopmatter{printfolios=true}
\pagestyle{plain}

\begin{document}

\title[Multi-Target Multiplicity]{Multi-Target Multiplicity: Flexibility and Fairness in Target Specification under Resource Constraints}

\author{Jamelle Watson-Daniels}
\email{jwatsondaniels@g.harvard.edu}
\affiliation{%
  \institution{Harvard University\thankssymb{*}}
  \streetaddress{}
  \city{Cambridge}
  \state{MA}
  \country{USA}
  \postcode{}
  \thanks{\thankssymb{*}Work done as an intern at Microsoft Research.}
}

\author{Solon Barocas}
\email{solon@microsoft.com}
\affiliation{%
 \institution{Microsoft Research}
 \streetaddress{}
 \city{New York}
 \state{New York}
 \country{USA}}

\author{Jake M. Hofman}
\email{jmh@microsoft.com}
\affiliation{%
  \institution{Microsoft Research}
  \city{New York}
  \state{New York}
  \country{USA}
}

\author{Alexandra Chouldechova}
\email{alexandrac@microsoft.com}
\affiliation{%
  \institution{Microsoft Research}
  \streetaddress{}
  \city{New York}
  \state{New York}
  \country{USA}}

\renewcommand{\shortauthors}{Watson-Daniels, et al.}

\begin{abstract}

Prediction models have been widely adopted as the basis for decision-making in domains as diverse as employment, education, lending, and health. Yet, few real world problems readily present themselves as precisely formulated prediction tasks. In particular, there are often many reasonable target variable options.  Prior work has argued that this is an important and sometimes underappreciated choice, and has also shown that target choice can have a significant impact on the fairness of the resulting model.  However, the existing literature does not offer a formal framework for characterizing the extent to which target choice matters in a particular task. Our work fills this gap by drawing connections between the problem of target choice and recent work on predictive multiplicity.  Specifically, we introduce a conceptual and computational framework for assessing how the choice of target affects individuals’ outcomes and selection rate disparities across groups.  We call this multi-target multiplicity.   Along the way, we refine the study of single-target multiplicity by introducing notions of multiplicity that respect resource constraints---a feature of many real-world tasks that isn’t captured by existing notions of predictive multiplicity.  We apply our methods on a healthcare dataset, and show that the level of multiplicity that stems from target variable choice can be greater than that stemming from nearly-optimal models of a single target.  

\end{abstract}



\keywords{target specification, fairness, multiplicity, predictive inconsistency}



\newcommand{\ydiff}{\tilde{y}}
\newcommand{\yobs}[1]{\ydiff^{(#1)}}
\newcommand{\err}[0]{\mathcal{E}}
\newcommand{\Ydiff}{\tilde{Y}}
\newcommand{\Yobs}[1]{\Ydiff^{(#1)}}

\newcommand{\onept}{i}
\newcommand{\twopt}{{i'}}
\newcommand{\topK}{\kappa}
\newcommand{\predIndic}{I}
\newcommand{\combine}{c}

\newcount\Comments  
\Comments = 1
\newcommand{\kibitz}[2]{\ifnum\Comments=1{\color{#1}{#2}}\fi}
\newcommand{\alex}[1]{\kibitz{purple}{AC: #1}}
\newcommand{\solon}[1]{\kibitz{orange}{SB: #1}}
\newcommand{\jake}[1]{\kibitz{blue}{JH: #1}}
\newcommand{\jwd}[1]{\kibitz{magenta}{JWD: #1}}

\maketitle

\section{Introduction}

Scholars have argued that prediction problems are ubiquitous across many domains of decision-making, from employment, to education, to health \cite{kleinberg2015prediction}. Yet real-world problems rarely present themselves as fully formed machine learning tasks \cite{provost2013data}. 
Critically, it is often not clear what target should be predicted to help decision makers achieve their goals %
\cite{hand1994deconstructing, Passi2019}. It is far from obvious, for example, how employers should go about making such choices in their hiring practices: if the goal is to hire the ``best'' people, what exactly should the model be predicting \cite{barocas2016big, Passi2019, kim2022race}? For a sales position, employers might choose to predict annual sales figures. But they could alternatively choose to predict how well the applicant will work with others, whether customers will actually enjoy interacting with the applicant, etc.  %
Even in domains where target choice might seem more obvious, there can still be a good deal of uncertainty. For example, while it might seem self-evident that creditors should be predicting default, what constitutes ``default'' is not a given. Creditors need to make an affirmative choice about the number of months of missed payments that ultimately count as ``default'' \cite{10.1214/088342306000000060}. 
In some cases, the decision is not based on just one chosen target, but instead a combination of targets. For example, many algorithmic tools currently used in criminal justice and human services function by aggregating predictions of several different targets, ranging from different types of criminal justice system encounters, to mental and physical health outcomes, to measures of housing stability \cite{kithulgoda2022predictive, arnold_ventures_2022}.

A recent line of work has explored the implications of this flexibility in target variable choice for fairness. In particular, researchers have pointed out that different choices for the prediction target can lead to more or less disparity in selection rates across groups \cite{Passi2019, Obermeyer2019, 10.1145/3442188.3445901, mitchell2021algorithmic, mullainathan2021inequity, martin2020participatory, kim2022race, fazelpour2021algorithmic}. One particularly well-known study by \citet{Obermeyer2019} illustrates both the risks and benefits of target choice. The authors examine an algorithm developed by a healthcare system used in determining patient eligibility for a high-risk coordinated care management program. They find that the healthcare system's choice to adopt healthcare costs as the target of prediction led to notable and \textit{avoidable} racial disparities.  Because Black patients in the United States generally incur lower health care costs at equal levels of underlying health care needs, predicting costs results in a score that systematically prioritizes healthier White patients over less healthy Black patients.  The authors show that a good deal of the racial disparity could have been avoided had the healthcare system instead chosen to predict a more direct measure of health outcomes. The study has been received as a important lesson in the dangers of insufficiently careful target choice.
But it also highlights that practitioners can take advantage of the latitude afforded by target choice to reduce selection rate disparities. 

The existing literature offers several examples that highlight the potential importance of target variable choice.  Prior work does not, however, offer a more general mathematical or computational framework for characterizing the extent to which target variable choice affects individuals' outcomes and selection rate disparities across groups. %
Our work aims to fill this gap.  Specifically, we draw connections to recent work on predictive multiplicity, which characterizes the flexibility in individual predictions over the set of ``good models,''  defined as models that achieve near-equivalent performance \cite{Marx2019, dong2019variable}.  This line of work demonstrates how such models may differ considerably in their individual predictions, and proposes computational methods for quantifying this variability through multiplicity measures such as \textit{ambiguity} and \textit{discrepancy}.  These observations, which are closely connected to the ``Rashomon effect'' introduced by \citet{Breiman2001}, have been of particular interest to those working on issues of fairness because they suggests that there will often exist a model with comparable performance but with less disparity in selection rates across groups than some baseline model---i.e., that the accuracy-fairness trade-off may be minimal in many settings \cite{Marx2019, Coston2021, Watson-Daniels2022, black2022model, Ali2021}.  By analogy, in the motivating "multi-target" setting---where there are many possible reasonable prediction targets to choose from---we can consider the set of "good models" that arises from, say, models that predict \textit{any} of the  individual targets well, %
models that predict a \textit{combination} of targets well, or that \textit{combine the predictions} of single-target models.  If each of the possible targets individually has merit as a basis for predictive allocation, then any \textit{combination} of those predictions is arguably also a reasonable candidate model.  We formalize these ideas and provide examples in the technical sections of the paper.

Prior work on multiplicity has at times explicitly steered clear of viewing target choice as a source of multiplicity, while at the same time acknowledging that it plays an important role in problem formulation more generally~\cite{black2022model}.  One reason for this is that multiplicity has been studied with respect to the task of predicting a pre-specified target. %
From this perspective, considering different outcomes amounts to considering a different task.  In our setting, however, we adopt a broader view of the task for which multiplicity is being assessed.  Specifically, we note that all the motivating examples we have just discussed can be thought of as \textit{predictive allocation tasks}---tasks where historical data is used to learn a ``prioritization'' or ``risk'' score and where that score then serves as the basis for deciding how to allocate resources, usually as part of human-in-the-loop decision processes. If a range of models trained to predict different targets can be similarly helpful in performing a predictive allocation task, then it is reasonable to understand the flexibility afforded by target choice in terms of multiplicity as well. %

In addition to introducing a framework for multi-target multiplicity, we also refine the standard treatment of predictive multiplicity in the single-target setting to account for additional practical constraints inherent in predictive allocation tasks.  In practice, predictive allocation tasks are typically governed by resource constraints.  There is only a finite amount of benefit, burden, or scrutiny that the system is able to allocate. 
For instance, the algorithm investigated by Obermeyer et al. was developed to help allocate coordinated care management to a certain number of clients. Similarly, employers cannot offer jobs to \emph{everyone} they predict will be a sufficiently good employee, whatever target or set of targets they choose to predict to make such an assessment. Given their limited budgets, they are likely only able to offer jobs to a select few applicants.  This means that the set of ``good models'', whether in the single-target or multi-target setting, can only include models that satisfy the resource constraint.  In this work we demonstrate how to introduce resource constraints into the study of multiplicity.

In summary, our work introduces the concept of multi-target multiplicity, and provides a formal and computational framework for quantifying the level of multiplicity that exists in a given predictive allocation task.  Along the way we introduce a refinement of single-target predictive multiplicity to the resource constrained setting, and introduce corresponding computational methods.  Our primary contributions are as follows.  
\begin{enumerate}[leftmargin=*] 
    \item We introduce a framework for assessing single-target multiplicity in the presence of resource constraints (\textsection\ref{sec::multiplicity}).  We define a new measure of predictive multiplicity (top-$\kappa$ ambiguity) and present a mixed integer program (MIP) to calculate this ambiguity measure for linear models.  
    \item We introduce the concept of multi-target multiplicity alongside a framework for assessing multi-target multiplicity for predictive allocation tasks (\textsection\ref{sec:multiple-target-ambiguity}).  
    \item We demonstrate how the framework can be used to assess fairness-related measures by presenting a MIP that calculates the minimum and maximum attainable selection rate for a given group (\textsection\ref{sec::multi_fair}).
    \item We demonstrate our framework on the healthcare dataset released by Obermeyer et al. and provide semi-synthetic experiments that aim to clarify how we might be able to improve fairness by moving to a  multi-target setting (\textsection\ref{sec:eval}).  Our results show that the level of multiplicity that stems from target variable choice can be considerably greater than that stemming from nearly-optimal models of a single target.
    
\end{enumerate}

\section{Related Work}
\label{sec:related-work}

\emph{Problem formulation and fairness.} 
Scholars have identified various reasons why the choice of target might raise concerns with fairness: some outcomes or qualities of interests might just be more evenly distributed across the population than others \cite{Passi2019, kim2022race}; certain outcomes or qualities of interests might be easier to predict with similar degrees of accuracy across the population than others \cite{coston2022validity}; some kinds of selection bias might cause certain outcomes or qualities of interest to be observed more or less frequently in certain groups rather than others, even if they occur at similar rates in reality \cite{lum2016predict}; certain targets might suffer from more so-called ``label bias'' than others---that is, systematically less accurate observations of the true value of the target for members of some groups than others \cite{jiang2020identifying, Coston2021, 10.1145/3442188.3445901}. Indeed, one way to understand the Obermeyer et al. study is as a form of label bias since healthcare costs acted as a systematically inaccurate measure of underlying healthcare needs. Our work departs from much of this literature by focusing on cases where there is no obviously right or clearly preferable choice of target or proxy and thus uncertainty about which to choose or whether to choose more than one.

\emph{Multi-task learning, multi-criteria decision-making, latent variable modeling, and fairness.}  While our use of the term "multi-target" might suggest a close connection to fairness considerations in multi-task learning (see, e.g., \cite{multitaskfairness}), the problem we study is distinct.  Whereas in multi-task learning the goal is to perform well on (and assess fairness for) \textit{each} of $K$ prediction tasks by borrowing strength across tasks, in our setting we are interested in arriving at a \textit{single model}, which may not perform optimally on any individual task, but which successfully captures multiple desiderata.  In this sense, our setting is more closely related to recent work on latent variable modeling in recommender systems that aims to optimize for a latent \textit{value} using a combination of noisy observed measures, such as clicks, replies, reshares, and other observable forms of user engagement \cite{milli2021optimizing, kleinberg2022challenge}.  A key difference is that we do not posit a specific notion of optimality, and instead explore the degree of multiplicity inherent in a class of learning procedures for forming a univariate prediction from multiple available targets.  Lastly, our work connects to the extensive literature on multi-criteria decision-making (MCDM) in operations research.  Indeed, the index model and index variable approaches we introduce in \textsection\ref{sec:multiple-target-ambiguity} parallel the classic \textit{weighted sums} method of combining multiple criteria (e.g., loss or other objective functions) into a single objective \cite{gass1955computational}. However, whereas the focus of MCDM is in the values of the different objective functions, we examine multiplicity, which pertains to the variability in prediction decisions for \textit{individual people or cases}.

\emph{Predictive multiplicity and fairness.} There is also a growing literature that seeks to explore the normative implications of multiplicity. Scholars have investigated the degree to which multiplicity can be leveraged to improve interpretability \cite{Semenova2019a} and explainability \cite{Fisher2019, Dong2019, Pawelczyk2020}. Others have examined the danger multiplicity poses for robustness \cite{DAmour2020} and non-arbitrariness \cite{Black2021Leave-one-outUnfairness, black2022model, ReconcilingProbabilityForecasts, rashomoncapacity, cooper2023variance}. Still others have focused on its implications for fairness \cite{Marx2019, Ali2021, Coston2021, black2022model, Watson-Daniels2022}. Notably, some of this work has defined measures and developed methods for evaluating predictive multiplicity in binary classification~\cite{Marx2019} and probabilistic classification~\cite{Watson-Daniels2022, rashomoncapacity}, focusing on so called ``ambiguity'' in models' predictions (i.e., the amount of disagreement in models' predictions on different points). Our work is the first to extend the analysis of multiplicity to the problem of predictive allocation under resource constraints.   We introduce measures of multiplicity for both single-target and multi-target settings, and introduce efficient methods that, for a subset of points, can certify whether those points contribute to the multiplicity measure. 
\textit{Resource constraints and fairness.}  Recent work on algorithmic fairness has noted the importance of considering resource constraints.  For instance, \citet{black2022algorithmic} discuss how the increased cost of auditing more complex tax filings can lead to prediction-based auditing strategies that disproportionately focus on lower income earners.  Other work has emphasized the importance of considering resource constraints in the context of algorithmic fairness in healthcare~\cite{pfohl2022net} and business analytics~\cite{dearteaga2022algorithmic}.  Our work provides a conceptual and computational framework for reasoning about fairness in the presence of resource constraints.

\vspace{-1em}
\vspace{3pt}
\section{Predictive multiplicity with resource constraints}
\label{sec::multiplicity}
In this section, we introduce a framework for examining single-target predictive multiplicity under resource constraints. Our goal is to study predictive consistency over models with near-optimal performance for each target option. We provide key definitions for predictive multiplicity in~\textsection\ref{sec:sub-ambiguity-def-single-target}, present a computational framework based on mixed-integer programming (MIP) for linear models in~\textsection\ref{sec:sub-compute-amb-single} and introduce methods for improving computational efficiency in~\textsection\ref{sec:improve-efficiency-single-target}. 

\subsection{Preliminaries}

We consider a dataset, $\data{} = \{(x_i, a_i, \yobs{k}_i)\}_{i=1}^{n}$, consisting of $n$ cases, where $x_i = [1,x_{i1},\ldots,x_{id}] \in \X \subseteq \R^{d+1}$ is a feature vector,  $y_i \in \R$ is an outcome of interest (potentially binary), and  $a_i \in A$ is a protected attribute.  We operate within the prediction-based allocation setting where a limited resource is to be allocated to instances in descending order of the predicted value $\hat y_i = \hat y(x_i)$. If case $i$ is selected, it is allocated $r_i$ resources.   Let $\kappa$ denote the resource cap, and let $i_{(j)} = i_{(j)}(\hat y)$ denote the instance with the $j$th largest value of $\hat y_i$  (so that $i_{(1)}$ is the index with the largest predicted value).  Let $\tau_i = \tau_i(\hat y)$ denote the rank of instance $i$ in \textit{descending} order.  

We assume that resources get  allocated to instances $i_{(1)}, \ldots, i_{(J)}$, where $J$ is the largest value such that
$\sum_{j = 1}^J r_{i_{(j)}} \le \kappa$.  The most common prediction-based allocation setting in practice is where there is simply a limit to the number of cases that can be selected (i.e., $r_i = 1 \ \forall i$, in which case $J = \kappa$).  While we restrict our attention to this setting, all metrics and computational methods can be extended to general $r_i \in \mathbb{R}_{>0}$.

\vspace{-0.5em} 
\subsection{Measuring predictive multiplicity under resource constraints}
\label{sec:sub-ambiguity-def-single-target}

Predictive multiplicity is the extent to which models with near-equivalent performance produce different predictions or classifications.  The set of models under consideration is often referred to as the set of ``good models'' \cite{dong2019variable}.

In prior work, \citet{Marx2019} introduced predictive multiplicity metrics for binary classification, and \citet{Watson-Daniels2022} considered the setting of probabilistic classification.
As in the standard predictive multiplicity setting \cite{Marx2019}, we begin with a \emph{baseline model}, $h_0$, that is the solution to an empirical risk minimization (ERM) problem of the form
$ \min_{\clf{} \in \Hset} L({\clf{}; \data{}})$,
over a hypothesis class, $\Hset$, with loss $L(~\cdot~; \data{})$.  In this context, one can consider the $\epsilon$-Rashomon set, which is the set of all models that achieve near-optimal loss. 

\begin{definition}[$\epsilon$-Rashomon set]
For a baseline model $\clf{}_0$ and error tolerance $\epsilon>0$, the $\epsilon$-Rashomon set of competing models is:

\begin{align*}
  \!  \epsset{\clf{}_0} \! :=\!  \{\clf{} \in \Hset: L(\clf{}; \data{}) \leq L(\clf{}_0; \data{})  +  \epsilon\}.
\end{align*}
\end{definition}
\vspace{-0.5em}
In \cite{Marx2019}, $\Hset$ is assumed to be a class of binary classifiers, and one of the predictive multiplicity measures the authors introduce is the \textit{ambiguity} of a prediction problem, 
\[
\alpha_\epsilon(h_0) = \frac{1}{n} \sum_{i = 1}^n \max_{h \in \epsset{\clf{}_0}} \indic{h(x_i) \neq h_0(x_i)}.
\]
Note that under this definition, a prediction problem will have high ambiguity if the positive classification rate, \mbox{$\frac{1}{n}|\{i : h(x_i) = 1\}|$}, differs greatly between $h_0$ and models in $\epsset{\clf{}_0}$.  That is, a high ambiguity may simply result from models that allocate a very different number of resources.

To define an analogous measure for the resource constrained setting, we need to compare models at the same resource cap, $\topK$.  Recall that, unlike in \cite{Marx2019}, we consider $\Hset$ that is a class of prediction models returning continuous values in $\mathbb{R}$, not binary classifiers.   Given a prediction model $h$ and resource cap $\topK$, let
\begin{align}\textit{Top}_{(i,\clf{},\kappa)} = \indic{\tau_i(\clf{}) \leq \kappa},
\end{align} 
be the indicator of whether instance $i$ is ``in the top-$\topK$'' when ranked according to the predicted values, $h$.  
We define two notions of ambiguity in this setting over a dataset sample $S \subset \data{}$.  

\begin{definition}[Top-$\topK$ ambiguity (all)]
\label{Def::Ambiguity-all}
 The {\em $(\epsilon,\topK)$-ambiguity (all)} over a sample, $S$, is the proportion of examples for which the top-$\kappa$  decision changes over the $\epsilon$-Rashomon set: 
\begin{align}
  \ambiguity{\epsilon}{\topK}{\clf{}}{S} 
  &:= 
  \frac{1}{|S|} \sum_{i \in S} \max_{\clf{} \in \epsset{\clf{_0}}}
  \indic{ \textit{Top}_{(i,\clf{},\topK)} \neq \textit{Top}_{(i,\clf_0,\topK)}} .
  \label{Eq::Ambiguity-all} 
\end{align}
\end{definition}

\begin{definition}[Top-$\topK$ Ambiguity (top)]
\label{Def::Ambiguity-top} 
The  {\em $(\epsilon,\topK)$-ambiguity (top)} over a sample, $S$,  is the proportion of top-$\topK$ examples according to $h_0$ that fall outside the top-$\topK$ for some models in the $\epsilon$-Rashomon set: 

\begin{align}
\begin{split}
  \ambiguity{\epsilon}{\topK}{\clf{}}{S}
  &:= 
  \frac{1}{\topK} \sum_{i \in S} \max_{\clf{} \in \epsset{\clf{_0}}} \textit{Top}_{(i,\clf_0,\topK)}
  \left( 1 -  \textit{Top}_{(i,\clf{},\topK)} \right).
  \label{Eq::Ambiguity-top-2} 
  \end{split}
\end{align}
\end{definition}

In addition to ambiguity over a sample, we can think about predictive consistency at the individual level. For an individual, we ask whether there is a model in the $\epsilon$-Rashomon set that can flip the top-$\topK$ selection decision. If there is a near-optimal model that flips the top-$\topK$ decision, then we can say the point is \emph{flippable}.

\begin{definition}[Flippable point]
    An instance $i$ is \textit{flippable} in $\epsset{\clf{_0}}$ if either 
    \begin{align*}
    Top_{(i,\clf{_0},\topK)} = 1 &\text{ and } \max_{h \in \epsset{\clf{_0}}} \tau_i(h) > \topK \quad; \text{or} \\
    Top_{(i,\clf{_0},\topK)} = 0 &\text{ and } \min_{h \in \epsset{\clf{_0}}} \tau_i(h) \le \topK. \label{Eq::flippable}
    \end{align*}
\end{definition}
Note that the top-$\topK$ ambiguity (all) is simply the fraction of instances that are flippable.  Top-$\topK$ ambiguity (top) is the fraction of instance in the top-$\topK$ of the baseline model $h_0$ that are flippable out of the top-$\topK$ by some $h \in \epsset{\clf{_0}}$.  

\subsection{Computing top-$\topK$ ambiguity for linear models}
\label{sec:sub-compute-amb-single}
In this section we describe the procedure for computing the two notions of top-$\topK$ ambiguity for linear models, $\Hset = \{h(x) = x^Tw : w \in \R^{d+1} \}$, and squared error loss, $L(h;\data) = L(w;\data) = RSS(w; \data) =  \sum_{i = 1}^n (y_i - x_i^Tw)^2$.  We use $h$ and $w$ notation interchangeably in the context of linear models.  Unless stated otherwise, we will assume throughout this section that the design matrix $X$ has been transformed to be orthonormal.  The problem is invariant to this operation, but working with an orthonormal X helps simplify expressions and reduce notational burden.

We begin by training the \emph{baseline model} $h_0$ that produces a ranking for each individual in our sample. Our goal is to determine the most meaningful change to this baseline rank for each point over the $\epsilon$-Rashomon set of competing models. Therefore, for instances with a baseline rank in the top-$\topK$, $Top_{(i,\clf{}_0,\topK)} = 1$, we calculate the \textit{maximum} attainable rank for this individual, $\max_{h \in \epsset{\clf{_0}}} \tau_i(h)$. For instances with a baseline rank outside the top-$\topK$, $Top_{(i,\clf{}_0,\topK)} = 0$, we calculate the \textit{minimum} attainable rank for this individual, $\min_{h \in \epsset{\clf{_0}}} \tau_i(h)$. Based on these minimum and maximum ranks, we can compute the proportion of examples whose baseline rank flips over the $\epsilon$-Rashomon set of competing models.

We employ integer programming for this computation. Prior work involves constructing a pool of candidate models that change individual predictions~\cite{Marx2019, Watson-Daniels2022}. From that pool of models, those with near-optimal performance are selected to compute ambiguity. These methods are indirect in that the MIPs do not directly constrain these candidate models to be within the $\epsilon$-Rashomon set. In our setting, we develop a MIP that does include this constraint. The following proposition allows us to neatly characterize the $\epsilon$-Rashomon set, $\epsset{\clf{_0}}$, for linear models. The proof is presented in Appendix~\ref{sec:app-single-target-details}.
\begin{proposition}
    Assume the design matrix $X_{n \times (d+1)}$ has been orthonormalized, and $w_0 = \argmin_{w \in R^{d+1}} \|y - Xw\|_2^2$ is the least squares solution. Then
    \begin{align*}
    \begin{split}
        \epsset{w_0} \! =\!  \{w \in \R^{d + 1}: RSS(w) \leq RSS(w_0)  +  \epsilon\} \\ 
        = \{ w \in \R^{d + 1} : \|w - w_0 \| \le \epsilon \}.
    \end{split}
    \end{align*}
    \label{prop:ortho-eps-level-linear}
\end{proposition}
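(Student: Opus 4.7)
The plan is to expand the residual sum of squares around the baseline solution $w_0$ and exploit the normal equations together with orthonormality of $X$ to collapse the RSS gap into a squared Euclidean distance in coefficient space.

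First, I would write
\[
RSS(w) = \|y - Xw\|_2^2 = \|(y - Xw_0) - X(w - w_0)\|_2^2,
\]
and expand the right-hand side into three pieces: $\|y - Xw_0\|_2^2$, a cross term $-2(y - Xw_0)^\top X(w - w_0)$, and $\|X(w - w_0)\|_2^2$. The cross term vanishes because $w_0$ is the least-squares solution and therefore satisfies the normal equations $X^\top(y - Xw_0) = 0$; this is the only substantive use of the fact that we are expanding around the minimizer. Hence
\[
RSS(w) - RSS(w_0) = \|X(w - w_0)\|_2^2.
\]

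Next, I would invoke the orthonormality assumption on $X$, i.e.\ $X^\top X = I_{d+1}$, so that $\|X(w - w_0)\|_2^2 = (w - w_0)^\top X^\top X (w - w_0) = \|w - w_0\|_2^2$. Substituting this back gives $RSS(w) \leq RSS(w_0) + \epsilon$ if and only if $\|w - w_0\|_2^2 \leq \epsilon$, which is the characterization claimed in the proposition (up to whether one parameterizes the tolerance as $\epsilon$ or $\sqrt{\epsilon}$ on the coefficient side; I would follow the convention used by the authors).

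There is no real obstacle here beyond bookkeeping: the entire argument is a completed-square identity plus the normal equations plus $X^\top X = I$. The only subtlety I would flag is the square-root convention in the radius, and the implicit assumption that orthonormalizing the design does not change the Rashomon set in any way that matters for downstream computation, which I would justify by noting that orthonormalization is an invertible linear reparameterization of $w$ that leaves both the fitted values $Xw$ and the RSS invariant.
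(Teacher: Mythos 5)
Your proof is correct. The paper itself does not carry out this computation directly: it cites Part~1 of Theorem~10 of Semenova et al., which characterizes the $\epsilon$-Rashomon set for ridge regression as $\{w : (w - w_0)^T(X^TX + \lambda I_{d+1})(w - w_0) \le \epsilon\}$, and then specializes to $\lambda = 0$ and $X^TX = I_{d+1}$. Your argument is a self-contained derivation of the same fact: the expansion of $\|(y - Xw_0) - X(w - w_0)\|_2^2$, the vanishing cross term via the normal equations, and the reduction $\|X(w-w_0)\|_2^2 = \|w - w_0\|_2^2$ under orthonormality are exactly the ingredients hidden inside the cited theorem, so the two routes are mathematically equivalent; yours buys transparency and independence from the external reference, while the paper's buys brevity and situates the result within a more general (penalized) family. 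You were also right to flag the $\epsilon$ versus $\sqrt{\epsilon}$ convention: your derivation yields $\|w - w_0\|_2^2 \le \epsilon$, which matches the constraint used in the paper's MIP formulation and the $\sqrt{\epsilon}$ factors in its prediction-gap bounds, whereas the displayed statement of the proposition writes the unsquared norm $\|w - w_0\| \le \epsilon$; the squared version is the one the rest of the paper actually relies on.
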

\vspace{-1em}
With this result in hand, we formulate a MIP to calculate the minimum and maximum rank assigned to each point over $\epsilon$-Rashomon set of competing models, which we call $\verb|FlipTopKMIP|(h_0, x_i; \kappa, \epsilon)$:

\begin{subequations}
\label{IP::Alternate}
\begin{equationarray}{@{}r@{}l@{\,}l@{\,}l>{\,}l>{\,}r@{\;}}
\min \text{ or } \max_{I \in \mathbb{R}^{2 \times (n - 1)}, w \in \mathbb{R}^{d + 1}} & \quad \sum_{\twopt \neq \onept} \predIndic_{(\twopt > i)} & & & & \notag \\ 
\st \notag \\
\predIndic_{(\twopt > i)} + \predIndic_{(i > \twopt)} & = 1  & \forall{\twopt}\in S \setminus i  \label{alt1}
\\ [3pt]
(x_{\twopt} - x_\onept)^T w & \leq  M * \predIndic_{(\twopt > i)} & \forall{\twopt}\in S\setminus i \label{alt2a} \\ [3pt]
(x_{\onept} - x_\twopt)^T w & \leq  M * \predIndic_{(\twopt > i)} & \forall{\twopt}\in S\setminus i \label{alt2b} \\ [3pt]
\|w - w_0\|_2^2 &\le \epsilon & & \label{alt3}\\ [3pt]
w_j & \in  \R & \miprange{j}{1}{d+1}  \label{alt4} \\ [3pt]
\predIndic_{(\twopt > i)}, \predIndic_{(i > \twopt)} & \in  \{0,1\} & \forall{\twopt}\in S\setminus i  \label{alt5} 
\end{equationarray}
\end{subequations}
where we set (see Appendix \ref{sec:app-M-bound-single-target} for details),
\[
M = \left( \sqrt{\|w_0\|_2^2 + \epsilon} \right) \max_{i, j}\|x_j - x_i\|_2 .
\]

The objective minimizes (or maximizes) the rank assigned to an individual $i$. The binary variables $\predIndic_{(\twopt > i)}$ serve as indicators that one point ranks higher than another: $\hat y_\twopt = x_\twopt^T w \ge x_\onept^T w = \hat y_\onept$. So the objective $\sum_{\twopt \neq \onept} \predIndic_{(\twopt > i)} = \tau_i(w) - 1$ is simply the rank (minus $1$) of point $i$ in model $w$. Constraint~\eqref{alt1} says that between two points, one point has to rank higher or lower making sure there are no ties. We connect the indicators to the rank definition through constraints \eqref{alt2a} and \eqref{alt2b} where the rank relationship is established using ``Big-M'' variable $M$. Constraint \eqref{alt3} enforces that $w$ is in the $\epsilon$-Rashomon set, as per Proposition~\eqref{prop:ortho-eps-level-linear}. 

$\verb|FlipTopKMIP|(h_0, x_i; \kappa, \epsilon)$ outputs the minimum or maximum rank assigned to each individual in our sample over the $\epsilon$-Rashomon set. We use this output to determine which points are flippable based on definition 3.4. Then, we simply calculate the proportion of flippable instances to arrive at top-$\kappa$ ambiguity.

\subsection{Improving efficiency by identifying provably (un)flippable points}
\label{sec:improve-efficiency-single-target}

Whereas prior related work on predictive multiplicity in binary \cite{Marx2019} and probabilistic \cite{Watson-Daniels2022} classification has involved solving a MIP for every point in $\data$, we show this is not necessary in our setting.  Specifically, we show that (i) one can efficiently determine that many points are provably \textit{not flippable} over the $\epsilon$-Rashomon set; and (ii) one can identify a subset of \textit{flippable} points by solving a proxy optimization problem with a closed-form solution that produces a $w \in \epsset{w_0}$ that may flip some points into the top-$\topK$.  This means that in practice we only need to solve the computationally expensive \verb|FlipTopKMIP| for a very small subset of points whose flippability remains undetermined following the two efficient filtering steps.  Our approach is grounded in the following three results, whose proofs appear in Appendix \textsection\ref{sec:app-unflippable}.

\begin{proposition}[Prediction gap bound over the $\epsilon$-Rashomon set]
    Define $\Delta_{\onept, \twopt}(w) := \hat y_\onept - \hat y_\twopt =  x_\onept^T  w - x_\twopt^T w$ to be the prediction gap between instances $\twopt$ and $\onept$ under model $w$.  For all $\onept, \twopt$ and $w \in \epsset{w_0}$,
    \[
    \Delta_{\onept,\twopt}(w) \le  \Delta_{\onept,\twopt}(w_0) + \sqrt{\epsilon}\|x_{\onept} - x_\twopt\|_2 
    =: B(\onept, \twopt ; \epsilon)
    \] 
\label{prop:pred-gap-bound}
\end{proposition}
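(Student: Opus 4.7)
The plan is to write $\Delta_{\onept,\twopt}(w) - \Delta_{\onept,\twopt}(w_0)$ as a linear functional in $(w-w_0)$ and then control that term via Cauchy--Schwarz, using the Rashomon set characterization from Proposition~\ref{prop:ortho-eps-level-linear}. Concretely, the first step is the trivial algebraic identity
\begin{equation*}
\Delta_{\onept,\twopt}(w) \;=\; (x_\onept - x_\twopt)^T w \;=\; (x_\onept - x_\twopt)^T w_0 + (x_\onept - x_\twopt)^T (w - w_0),
\end{equation*}
which isolates the baseline prediction gap $\Delta_{\onept,\twopt}(w_0)$ plus a perturbation term that depends only on $w - w_0$.

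The second step is to bound the perturbation using Cauchy--Schwarz:
\begin{equation*}
(x_\onept - x_\twopt)^T (w - w_0) \;\le\; \|x_\onept - x_\twopt\|_2 \, \|w - w_0\|_2.
\end{equation*}
Then I would invoke Proposition~\ref{prop:ortho-eps-level-linear}, which (under the orthonormal $X$ convention, where $RSS(w) - RSS(w_0) = \|w-w_0\|_2^2$) tells us that $w \in \epsset{w_0}$ is equivalent to $\|w - w_0\|_2 \le \sqrt{\epsilon}$. Substituting this into the Cauchy--Schwarz bound yields
\begin{equation*}
(x_\onept - x_\twopt)^T (w - w_0) \;\le\; \sqrt{\epsilon}\,\|x_\onept - x_\twopt\|_2,
\end{equation*}
and combining with the identity from step one gives exactly $\Delta_{\onept,\twopt}(w) \le \Delta_{\onept,\twopt}(w_0) + \sqrt{\epsilon}\,\|x_\onept - x_\twopt\|_2 = B(\onept,\twopt;\epsilon)$.

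There is no real obstacle here --- the proof is a one-line Cauchy--Schwarz argument on top of the geometric description of the Rashomon set. The one subtlety worth noting in the write-up is the conversion between the loss-gap tolerance $\epsilon$ and the parameter-norm radius $\sqrt{\epsilon}$: since RSS is quadratic in $w$ and $X$ is orthonormal, the tolerance $\epsilon$ on $RSS(w) - RSS(w_0)$ translates to a radius $\sqrt{\epsilon}$ on $\|w-w_0\|_2$, which is precisely what makes the $\sqrt{\epsilon}$ factor appear in the bound. The bound is also tight, attained when $w - w_0$ is aligned with $x_\onept - x_\twopt$ and has norm $\sqrt{\epsilon}$; I would remark on this briefly since it motivates using $B(\onept,\twopt;\epsilon)$ as an efficient filter for provably-unflippable pairs in Section~\ref{sec:improve-efficiency-single-target}.
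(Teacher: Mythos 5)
Your proof is correct and follows essentially the same route as the paper's: the identity $\Delta_{\onept,\twopt}(w) = (x_\onept - x_\twopt)^T w_0 + (x_\onept - x_\twopt)^T(w - w_0)$, followed by Cauchy--Schwarz and the bound $\|w - w_0\|_2 \le \sqrt{\epsilon}$ from the Rashomon-set characterization. Your explicit remark on the conversion from the loss tolerance $\epsilon$ to the parameter-ball radius $\sqrt{\epsilon}$ is a welcome clarification, since the displayed statement of Proposition~\ref{prop:ortho-eps-level-linear} writes the radius inconsistently with how it is actually used here and in constraint~\eqref{alt3}.
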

\vspace{-2em}
\begin{corollary}[Provably unflippable points] Suppose $\onept$ is not in the top-$\topK$ for model $w_0$; 
 i.e., $Top_{(\onept,w_0,\topK)} = 0$. If
$\#\{\twopt: B(\onept, \twopt ; \epsilon) < 0\} \ge \topK,$
then $Top_{(\onept,w,\topK)} = 0 \ \ \forall w \in \epsset{w_0}$.
\label{cor:unflippable}
\end{corollary}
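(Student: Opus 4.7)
The plan is to reduce the statement to a simple counting argument built on top of Proposition 4.2. Fix an arbitrary $w \in \epsset{w_0}$ and any $\twopt$ with $B(\onept, \twopt; \epsilon) < 0$. By Proposition 4.2, the prediction gap satisfies
\[
\hat y_\onept - \hat y_\twopt = \Delta_{\onept,\twopt}(w) \le B(\onept, \twopt; \epsilon) < 0,
\]
so $\hat y_\twopt > \hat y_\onept$ strictly. The strictness is important because it means $\twopt$ unambiguously ranks above $\onept$ under $w$, independent of any tie-breaking rule.

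Now let $T := \{\twopt : B(\onept, \twopt; \epsilon) < 0\}$, and suppose $|T| \ge \topK$. By the argument above, every element of $T$ outranks $\onept$ under the fixed $w$. Hence the descending rank of $\onept$ satisfies $\tau_\onept(w) \ge |T| + 1 \ge \topK + 1$, so $\textit{Top}_{(\onept, w, \topK)} = 0$. Since $w \in \epsset{w_0}$ was arbitrary, the conclusion holds for every $w$ in the $\epsilon$-Rashomon set.

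There is essentially no analytic obstacle here; the content of the result is entirely bundled into Proposition 4.2, and the corollary is a clean counting observation. The only subtle point to get right in the write-up is to emphasize that the bound $B$ holds uniformly over $\epsset{w_0}$ and that strict inequality $B(\onept, \twopt; \epsilon) < 0$ yields a strict ranking $\hat y_\twopt > \hat y_\onept$, which is what lets us count $|T|$ points as strictly outranking $\onept$ regardless of the tie-breaking convention implicit in the definition of $\tau$. The hypothesis that $\onept$ is not in the top-$\topK$ under $w_0$ is not strictly needed for the conclusion to follow from the counting step, but it is the natural regime in which the corollary is used as a filter in Section 3.3.
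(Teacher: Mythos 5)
Your proof is correct and follows essentially the same argument as the paper's: apply Proposition~\ref{prop:pred-gap-bound} to conclude that each $\twopt$ with $B(\onept,\twopt;\epsilon)<0$ strictly outranks $\onept$ for every $w\in\epsset{w_0}$, then count. Your added remarks on strictness and on the redundancy of the hypothesis $Top_{(\onept,w_0,\topK)}=0$ are accurate but do not change the route.
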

Conceptually, Proposition~\ref{prop:pred-gap-bound} establishes a bound on the gap between the predicted values of any two points over the whole $\epsilon$-Rashomon set in terms of the prediction gap under the baseline model, $w_0$.  Corollary~\ref{cor:unflippable} then says that if there are at least $\topK$ points, $\twopt \neq \onept$, whose predicted value is guaranteed to exceed that of point $\onept$ for every model $w \in \epsset{w_0}$, then $\onept$ is unflippable.   

\begin{proposition}[Prediction maximizing model] The predicted value of point $i$, $\hat y_i = x_i^T w$, over the $\epsilon$-Rashomon set is maximized at,
\begin{equation}
w^* = \argmax_{w \in \epsset{w_0}} \hat y_i (w) = w_0 + \sqrt{\epsilon} \frac{x_i}{\|x_i\|_2}.
\end{equation}
\label{prop:max-yhat}
\end{proposition}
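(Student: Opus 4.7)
The plan is to recognize that, once we invoke Proposition 3.6, this is just the problem of maximizing a linear functional over a Euclidean ball, which is solved in one line by Cauchy--Schwarz. Concretely, Proposition 3.6 says that, when $X$ is orthonormalized, the $\epsilon$-Rashomon set is exactly the ball $\{w : \|w - w_0\|_2 \le \sqrt{\epsilon}\}$ centered at the least-squares solution $w_0$ (with radius $\sqrt{\epsilon}$, as is consistent with the form of the bound in Proposition 3.8 and with the expansion $RSS(w) - RSS(w_0) = \|w - w_0\|_2^2$ under $X^T X = I$). So the optimization problem becomes
\[
\max_{\|w - w_0\|_2 \le \sqrt{\epsilon}} x_i^T w.
\]

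The next step is a reparameterization. I would set $u := w - w_0$, rewrite the objective as $x_i^T w = x_i^T w_0 + x_i^T u$, and drop the constant $x_i^T w_0$. The problem reduces to maximizing $x_i^T u$ over $\|u\|_2 \le \sqrt{\epsilon}$. By Cauchy--Schwarz, $x_i^T u \le \|x_i\|_2 \|u\|_2 \le \sqrt{\epsilon}\,\|x_i\|_2$, with equality if and only if $u$ is a nonnegative scalar multiple of $x_i$ with $\|u\|_2 = \sqrt{\epsilon}$, i.e., $u = \sqrt{\epsilon}\, x_i / \|x_i\|_2$. Unpacking the substitution yields $w^* = w_0 + \sqrt{\epsilon}\, x_i / \|x_i\|_2$, which is the claimed maximizer; uniqueness follows from the strict equality case of Cauchy--Schwarz whenever $x_i \neq 0$.

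Honestly I do not expect a real obstacle here, since the result is the standard ``linear functional on a ball'' fact. The only small thing to be careful about is ensuring that the radius of the Rashomon ball is $\sqrt{\epsilon}$ rather than $\epsilon$; this is what makes the scaling in the final expression come out correctly and is forced by the identity $RSS(w) - RSS(w_0) = \|w - w_0\|_2^2$ that underlies Proposition 3.6. A minor edge case to mention is $x_i = 0$ (a degenerate feature vector), in which case $\hat y_i = 0$ for every $w$ and the maximizer is not unique; this case is vacuous for the downstream use of the proposition, which is to construct a witness model that pushes $x_i$ into the top-$\kappa$.
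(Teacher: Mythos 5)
Your proof is correct and is essentially identical to the paper's: both decompose $\hat y_i(w) = x_i^T w_0 + x_i^T(w - w_0)$, apply Cauchy--Schwarz together with the constraint $\|w - w_0\|_2 \le \sqrt{\epsilon}$, and verify that the stated $w^*$ attains the bound $x_i^T w_0 + \sqrt{\epsilon}\,\|x_i\|_2$. Your added remarks on the equality case, uniqueness, and the degenerate $x_i = 0$ situation go slightly beyond the paper's proof but do not change the argument.
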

\vspace{-1em}
For points that are not ruled out by Corollary~\ref{cor:unflippable}, Proposition~\ref{prop:max-yhat} provides a candidate model within the Rashomon set that may flip a point into the top-$\topK$.  Note that this result does not preclude the possibility that $Top_{(i, w^*, \topK)} = 0$ while also $Top_{(i, w', \topK)} = 1$ for some other $w' \in \epsset{w_0}$.  Taken together, these results often significantly reduce the number of points for which one needs to run the MIP in order to determine their flippability.  





\section{Multi-target Multiplicity and Fairness}
\label{sec:multiple-target-multiplicity}
In the previous section, we introduced the top-$\topK$ ambiguity measure for characterizing predictive multiplicity for a single target, $y$, over the $\epsilon$-Rashomon set.  As discussed at the outset, an important potential source of multiplicity is in the specification of the target outcome itself.  In this section, we introduce a measure of multi-target multiplicity for the setting where the multiple targets will ultimately be combined in some way to produce a single score that will be used to prioritize allocation.  We also discuss group fairness by examining how the selection rate for a given group varies depending on the specific choice of combining rule.

\subsection{Multi-target ambiguity and index models}
\label{sec:multiple-target-ambiguity}
Given candidate targets, $\yobs{1}, \ldots, \yobs{K}$, and features $X$, we consider a family of ``combining procedures,'' $\combine_\alpha$, parameterized by $\alpha$ that map from training data $(X,\yobs{1}, \ldots, \yobs{K})$ to the space of prediction models $\alphaset{} = \{\clf_\alpha: \mathcal{X} \mapsto \mathbb{R}\}.$  Under a resource constraint of $\topK$, resources will then be allocated to the units with the $\topK$ highest values of $\clf_\alpha (x_i)$.  We are interested in characterizing how the top-$\kappa$ set varies across the parameter $\alpha$ governing the combining procedure, $\combine_\alpha$.  More formally, we define \textit{multi-target ambiguity} as follows.
\begin{definition}[Multi-target ambiguity]
\label{Def::Ambiguity-alpha}
 The {\em $(\alpha, \topK)$-multi-target--ambiguity} of a combining procedure $c_\alpha$ over a sample $S$ is the proportion of examples whose top-$\topK$ decision varies depending on the choice of $\alpha$, 
\begin{align}
\begin{split}
  A_{\alpha,\topK}(S)
  &:= 
  \frac{1}{|S|} \sum_{i \in S} \max_{\clf_\alpha, \clf_{\alpha'} \in \alphaset{}}
  \indic{ \textit{Top}_{(i,\clf_{\alpha},\topK)} \neq \textit{Top}_{(i,\clf_{\alpha'},\topK)} }. 
  \label{Eq::Ambiguity-alpha} 
  \end{split}
\end{align}
\end{definition}
\noindent Whereas in the single target case 
we were interested in ambiguity over the $\epsilon$-Rashomon set, here we focus on ambiguity over the \textit{combining procedure}. Conceptually, a point is ``ambiguous'' if whether it is in the top-$\topK$ depends on the particular choice of $\alpha$ in the combining procedure.  The family of models generated by the combining parameters $\alpha$ is the multi-target family of ``good models.''

To make the discussion more concrete, we introduce two combining procedures inspired by existing practice, the \textit{index model} approach and the \textit{index variable} approach.  

\begin{definition}[Index model]  The \textit{index model} is defined as 
\begin{equation}
\hat y_{IM}(x; \alpha) = \combine^{IM}_\alpha(X,\yobs{1}, \ldots, \yobs{K})(x) =  \sum_{k = 1}^K \alpha_k \yhat{k}(x),
\label{eq:index-model}
\end{equation}
where $\alpha$ is a weight vector in the $K$-simplex, $\alpha \in \Simplex{K} := \{\alpha \in \R^K : \sum_{k = 1}^K \alpha = 1, \alpha_k \ge 0 \ \forall k\}$, and $\yhat{k}(x)$ is a prediction model for target $\yobs{k}$.       
\end{definition}  

Note that for this definition to make sense, we assume that the individual predictors $\yhat{k}$ are first standardized to an appropriate common scale, such as by rescaling $\hat y \gets \frac{\hat y - mean(\hat y)}{sd(\hat y)} $ or converting to percentiles prior to combining.  The choice of standardization function does influence results.    Choosing a single target outcome $k_0$ is a special case of an index model with $\alpha_{k_0} = 1 $ and $\alpha_k = 0$ for $k \neq k_0$.  
An advantage of the index model approach is that it places no restrictions on the training procedure used to construct $\hat y^{(k)}$.  Where appropriate, multi-task learning approaches can be used to jointly learn models across the targets.

This approach is motivated by existing practice in domains such as criminal justice and human services, where multiple so-called scales (i.e., $\yhat{k}$'s) are constructed to predict different outcomes, and are then aggregated into prioritization schemes or decision recommendations.  For instance, the Allegheny Housing Assessment (AHA) tool used to prioritize housing services for persons experiencing homelessness sums the predictions of three $\yobs{k}$ assessed within 12 months of the assessment date: (i) the likelihoood of inpatient mental health services; (ii) the likelihood of jail booking; and (iii) the likelihood of 4 or more ER visits \cite{kithulgoda2022predictive}.%
\footnote{These tools are presented as examples of models that have been constructed for real world applications. We are not endorsing the use of these other tools.}  

An alternative to index models is an index variable approach, where instead of first forming predictions and then aggregating the different scales, a composite target outcome is formed and then that target is predicted.  
\begin{definition}[Index variable]
    Given candidate targets $\yobs{1}, \ldots, \yobs{K}$, features $X$, and weights $\alpha \in \Simplex{K}$, an \textit{index variable} model, $\hat y_{IV}(x; \alpha)$ is defined by the minimizer,
\begin{equation}
\hat y_{IV} (x; \alpha) = \min_{\clf{} \in \Hset} L(\clf{}; \yobs{\alpha}), \quad \text{ where } \quad \yobs{\alpha} = \sum_{k = 1}^K \alpha_k \yobs{k}.
\label{eq:index-variable}
\end{equation}
\end{definition}
  Conceptually, the index variable approach can be thought of as first forming a composite outcome that is believed to more comprehensively describe some latent quantity, and then finding the optimal predictor for that outcome.  
  Both for the index model and index variable formulation, the parameter $\alpha$ captures potential underspecification in the choice of target.    In the case of linear models, the index model and index variable approach coincide. 

\begin{proposition}[Equivalence of index model and index variable approaches for linear models.]
    If we restrict consideration to linear models whose solution takes the form $\hat y = M_X y$ for some $n \times n$ matrix $M_X$ that depends on $X$ but not on $y$, then the index model and index variable approach are equivalent.
    \label{prop:equiv-im-iv-linear}
\end{proposition}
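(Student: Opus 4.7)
The plan is to exploit the key hypothesis that fitted values depend linearly on the response via a fixed matrix $M_X$ that depends only on $X$. Since $M_X$ is identical across every choice of target, the fitting operation $y \mapsto \hat y$ commutes with taking linear combinations over targets, and the equivalence follows by direct computation.

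First I would fix $\alpha \in \Simplex{K}$ and expand the index model side. Applying the hypothesis target-by-target gives $\yhat{k} = M_X \yobs{k}$ for each $k$, so substitution into~\eqref{eq:index-model} yields
\[
\hat y_{IM}(\cdot;\alpha) = \sum_{k=1}^K \alpha_k \yhat{k} = \sum_{k=1}^K \alpha_k M_X \yobs{k} = M_X \sum_{k=1}^K \alpha_k \yobs{k} = M_X \yobs{\alpha},
\]
where the third equality uses linearity of matrix multiplication. Next, note that $\yobs{\alpha}$ is itself an $n$-vector of responses, so applying the hypothesis directly to it gives $\hat y_{IV}(\cdot;\alpha) = M_X \yobs{\alpha}$; crucially, $M_X$ is unchanged because it depends only on $X$. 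Comparing the two expressions yields $\hat y_{IM} = \hat y_{IV}$, establishing the equivalence.

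The only real subtlety, more conceptual than technical, is verifying that the hypothesis ``$\hat y = M_X y$'' actually applies to the ERM minimizer defining $\hat y_{IV}$ in~\eqref{eq:index-variable} when the response is the composite target $\yobs{\alpha}$. This is precisely where the restriction to ``linear models whose solution takes the form $\hat y = M_X y$'' is used: it guarantees that the same $X$-dependent operator produces the fitted values for every response vector one might plug in. This holds for OLS (with $M_X = X(X^T X)^{-1} X^T$), for ridge regression with a fixed penalty, and more generally for any penalized least-squares procedure with a response-independent quadratic penalty. I would state this dependence explicitly at the outset, since it is the substantive content of the hypothesis; beyond that, the argument is purely a commutation of two linear operations. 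The same reasoning extends pointwise to out-of-sample predictions, since for such estimators $\hat y(x) = m_x^T y$ with $m_x$ depending only on $(x, X)$.
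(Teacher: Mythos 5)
Your argument is correct and is essentially the paper's own proof: the paper runs the identical chain of equalities $\hat y_{IV}^{(\alpha)} = M_X \yobs{\alpha} = \sum_k \alpha_k M_X \yobs{k} = \sum_k \alpha_k \yhat{k} = \hat y_{IM}^{(\alpha)}$, just starting from the index-variable side rather than the index-model side. Your added remarks on where the hypothesis $\hat y = M_X y$ is invoked for the composite target $\yobs{\alpha}$ are a reasonable clarification but do not constitute a different approach.
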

\noindent A proof is provided in Appendix \textsection\ref{sec:app-im_iv_equiv_proof}.  Note that linear regression is a special case of a linear model, with $M_X = X(X^TX)^{-1}X^T$.  Other models such as regression splines fall into this class as well.  

In the remainder of this work we focus on the index model approach, as it can be analysed in a computationally tractable way for general predictors $\yhat{k}$.  Due to the equivalence result, our methods are directly applicable to the index variable approach in the case of linear models.

\subsection{Computing multi-target top-$\kappa$ ambiguity for index models}
\label{sec:compute-ambiguity-multi-target}

In this section, we introduce a MIP for computing multi-target ambiguity as defined in Eq.~\eqref{Eq::Ambiguity-alpha} for the family of index models.  The MIP calculates the minimum and maximum rank attainable for each individual point over the combining parameters, $\alpha$.  The multi-target ambiguity is then given as the proportion of points for which the minimum rank is $\le \kappa$ while the maximum rank $\ge \kappa$.  %

For combining procedures parameterized by $\alpha$, the min and max rank of each individual $i \in S$ can be obtained by solving the optimization problem, which we call $\verb|FlipTopKMultiMIP|(x_i; \kappa)$: 

\begin{subequations}
\label{IP::IndexModel}
\begin{equationarray}{@{}r@{}l@{\,}l@{\,}l>{\,}l>{\,}l@{\;}}
\min_{I \in \{0,1\}^{n - 1},\, \alpha \in \mathbb{R}^K} & \sum_{\twopt \neq i} \predIndic_{(\twopt > i)} - 0.5 \sum_{k = 1}^K \alpha_K  & \text{ or } & & & \notag   \\
\max_{I \in \{0,1\}^{n - 1},\, \alpha \in \mathbb{R}^K} & \sum_{\twopt \neq i} \predIndic_{(\twopt > i)} + 0.5 \sum_{k = 1}^K \alpha_K & & & & \notag \\ 
\st \notag \\ [2pt]
\predIndic_{(\twopt > i)} + \predIndic_{(i > \twopt)} & =  1 & \miprange{\twopt}{1}{n}\setminus i \label{im1} \\ [2pt]
\hat y_{IM}(x_\twopt; \alpha) - \hat y_{IM}(x_i; \alpha) & \leq  M * \predIndic_{(\twopt > i)} & \miprange{\twopt}{1}{n} \setminus i \label{im2a} \\ [2pt]
\hat y_{IM}(x_i; \alpha) - \hat y_{IM}(x_\twopt ; \alpha) & \leq  M * \predIndic_{(i > \twopt)} & \miprange{\twopt}{1}{n} \setminus i \label{im2b} \\ [2pt]
0 \le \alpha_k &\le  1 &  \miprange{k}{1}{K} \label{im3}\\ [2pt]
0.1 \le  \sum_{k = 1}^K \alpha_k &\le 1 & \label{im4}\\ [2pt]
\alpha_k & \in  \R & \miprange{K}{0}{d}  \label{im5} \\ [2pt]
\predIndic_{(\twopt > i)}, \predIndic_{(i > \twopt)} & \in  \{0,1\} & \miprange{\twopt}{1}{n}\setminus i  \label{im6}
\end{equationarray}
\end{subequations}
where $\hat y_{IM}(x_i; \alpha)$ is shorthand for $\sum_{k = 1}^K \alpha_k \yhat{k}(x_i)$, and
\begin{equation*}
M = \max_{\twopt,k} \hat y^{(k)} (x_\twopt) - \min_{i,k} \hat y^{(k)}(x_i).
\end{equation*}

$\verb|FlipTopKMultiMIP|(x_i; \kappa)$ fits the parameters $\alpha$ that minimize (or maximize) the rank assigned to individual $i$. The objective minimizes (or maximizes) the sum of individuals ranked higher than individual $i$ with an additional term in the objective that forces $\sum \alpha_k = 1$. Again, the binary variables $\predIndic_{(\twopt > i)}$ serve as indicators that one point ranks higher than another. And constraint~\eqref{im1} says one point has to rank higher or lower than another (i.e. there are no ties). We connect the indicators to the ordering relations $\hat y_{IM}(x_\twopt; \alpha) \ge  \hat y_{IM}(x_\onept; \alpha)$ and $\hat y_{IM}(x_\twopt; \alpha) < \hat y_{IM}(x_\onept; \alpha)$ through constraints \eqref{im2a} and \eqref{im2b}, introducing the ``Big-M'' variable, $M$. Constraints \eqref{im3} and \eqref{im4} ensure are a soft version of the constraint that $\alpha$ is in the simplex, $\Simplex{K}$. %

This formulation is similar to \verb|FlipTopKMIP| from the single-target case, but the objective has an additional term, and the optimization here is over the combining weights $\alpha$ rather than the parameters of the individual predictors $\yhat{k}$. As in the single target setting, we calculate ambiguity by identifying flippable points using a MIP.  In this setting, there is no baseline model, so the term "flippable" now refers to points where there exist two choices of combining parameters, $\alpha \neq \alpha'$, such that $Top_{(i, \alpha, \kappa)} \neq Top_{(i, \alpha', \kappa)}$.  Furthermore, the optimization here is no longer over an $\epsilon$-Rashomon set---a notion which does not naturally extend to the multiple target setting due to the absence of a baseline model---but rather over the parameters $\alpha$ governing the combining rule.  

As in the single-target context, we can once more reduce the number of times we need to run the MIP by identifying points that provably cannot appear in the top-$\topK$ set for any choice of $\alpha$, and characterize the prediction-maximizing choice of $\alpha$ for each point.  The results and accompanying proofs are in Appendix \textsection\ref{sec:app-unflippable-IM}. 

\subsection{Group-level selection rates in top-$\topK$ selection with multiple targets }
\label{sec::multi_fair}

In this section we demonstrate how our framework can be applied to examine group fairness concerns. 
 Specifically, we consider how the selection rate---i.e., the proportion of instances from a given group, $A = a$, in the top-$\topK$---varies with the combining weights $\alpha$.\footnote{While, in principle, one can also examine measures such as the False Positive Rate and True Positive Rate by analysing the subsample of instances for which $\yobs{k} = 0$ (or 1, for TPR), it is not entirely clear how such quantities should be interpreted.  How should one weigh a high FPR for a given target against a low FPR for a different one in a setting where the "correct" choice of target is itself in doubt?}

We can compute the combining parameters that maximize the number of individuals in a given group who are selected to be in the top-$\topK$. That is, we consider,
\begin{equation}
\min_\alpha or \max_{\alpha} \sum_{i = 1}^n \indic{A = a} Top_{(i, \alpha, \topK)} = \min_\alpha or \max_{\alpha} \sum_{i \in G_a}^n  Top_{(i, \alpha, \topK)} \label{eq::multiFAIR}
\end{equation}
where $G_a = \{i : A_i = a\}$ denotes all the instances that are in protected group $A = a$.
While the goal of our work is to characterize the variation in selection rates afforded by different choices of combining parameters, $\alpha$, the methods can also be used to select a particular model that maximizes (or minimizes) those rates.

We compute the quantities in Eq.~\eqref{eq::multiFAIR} through another MIP.  For this purpose we introduce variables $T_i \in \{0,1\}$ that play the role of the $Top_{(i, \alpha, \topK)}$ indicator. We refer to this MIP as \\ \verb|GroupSelectRateTopKMultiMIP|$(a;\kappa)$: 

\begin{subequations}
\label{IP::IndexModelFairness-reduced}
\begin{equationarray}{@{}c@{}l@{\,}l@{\,}l>{\,}l>{\,}r@{\;}}
\min_{I \in \{0,1\}^{2n \times |G_a|},\ T \in \{0,1\}^{|G_a|}, \ \alpha \in \Simplex{K}} & \sum_{i \in G_a} T_i - 0.5 \sum_{k = 1}^K \alpha_k  & \text{ or } & & &  \notag   \\
\max_{I \in \{0,1\}^{2n \times |G_a|},\ T \in \{0,1\}^{|G_a|}, \ \alpha \in \Simplex{K}} 
& \sum_{i \in G_a} T_i + 0.5\sum_{k = 1}^K \alpha_k \notag \\ [2pt]
\st \notag \\
\predIndic_{(\twopt > i)} + \predIndic_{(i > \twopt)} =  1  & \forall{\onept} \in G_a,  \forall{\twopt}\in S \setminus i \label{imf1} \\ [3pt] 
\hat y_{IM}(x_\twopt; \alpha) - \hat y_{IM}(x_\onept; \alpha) \leq  M_I * \predIndic_{(\twopt > \onept)} \ \ \  & \forall{\onept} \in G_a,  \forall{\twopt}\in S\setminus i \label{imf2a} \\ [3pt]
\hat y_{IM}(x_\onept; \alpha) - \hat y_{IM}(x_\twopt; \alpha) \leq  M_I * \predIndic_{(\onept > \twopt)} & \forall{\onept} \in G_a,  \forall{\twopt}\in S\setminus i \label{imf2b} \\ [3pt]
\kappa - \sum_{\twopt \neq \onept} \predIndic_{(\twopt > \onept)} \leq  \kappa * T_\onept & i \in G_a \label{imf3a} \\ [3pt]
\left(1 + \sum_{\twopt \neq \onept} \predIndic_{(\twopt > \onept)} \right) - \kappa   \leq  (n - \kappa) (1 - T_\onept) & i \in G_a \label{imf3b} \\ [3pt]
0 \le \alpha_k \le  1 &  \miprange{k}{1}{K} \label{imf4}\\ [2pt]
0.1 \le \sum_{k = 1}^K \alpha_k \le 1 & \label{imf5}\\ [2pt]
\predIndic_{(\twopt > \onept)}, \predIndic_{(i > \twopt)}  \in  \{0,1\} & \miprange{\onept \neq \twopt}{1}{n}  \label{imf6} \\ [2pt]
T_\onept \in \{0,1\} & i \in G_a \label{imf7}
\end{equationarray}
\end{subequations}

The objective minimizes (or maximizes) the number of individuals in group $G_a$ that are selected to be in the top-$\topK$. There is an additional term in the objective that forces $\sum \alpha_k = 1$, which has the effect of enforcing the simplex constraint on $\alpha$. Recall, the binary variables $\predIndic_{(\twopt > i)}$ serve as indicators that one point ranks higher than another. Thus, constraint~\eqref{imf1} means that one point has to rank higher or lower making sure there are no ties. We connect the indicators to the ordering relations $\hat y_{IM}(x_\twopt; \alpha) \ge \hat y_{IM}(x_\onept; \alpha)$ or $\hat y_{IM}(x_\twopt; \alpha) < \hat y_{IM}(x_\onept; \alpha)$ through constraints \eqref{imf2a} and \eqref{imf2b} using the ``Big-M'' variable $M_I$. This value is set to be the max possible difference in prediction between two points $M_I = \max_{\onept,k} \hat y^{(k)} (x_\onept) - \min_{\onept,k} \hat y^{(k)}(x_i)$. To make sure $T_i$ reflects whether individuals are in group $G_a$ and ranked in the top-$\topK$, we have constraints \eqref{imf3a} and \eqref{imf3b}. Constraints \eqref{imf4} and \eqref{imf5} are other pieces of the simplex constraint. 

\section{Stable points}
\label{sec:stable-points}
Thus far our focus has been on multiplicity and identifying \textit{flippable} points---those whose decision depends on the particular model chosen among the set $\mathcal{G}$ of good models.\footnote{E.g., $\mathcal{G} = \epsset{h_0}$ in the single target setting, or the set of index models parameterized by $\alpha \in \Simplex{K}$ in the multi-target setting. }
In practice, however, we may be equally interested in \textit{un}flippable points.  As prior work has pointed out, the presence of multiplicity raises concerns about arbitrariness: What justification can you offer someone who receives an adverse decision from the chosen model when there may exist another good model that would have given them a favorable decision ~\cite{black2022model}?   Our work can speak to this as well. Concretely, our proposed methods can be used to identify what we call \textit{stable points}: cases whose decisions do not change over the set of good models.   
\begin{definition}
Let $\mathcal{G}$ be the set of ``good models''.  We say that $i_0$ is a \textit{stable $\topK$-selected} point if $Top_{(i_0, h, \topK)} = 1 \,\, \forall h \in \mathcal{G}$. Similarly, we say that $i_0$ is a \textit{stable $\topK$-unselected} point if $Top_{(i_0, h, \topK)} = 0 \,\, \forall h \in \mathcal{G}$. 
\end{definition}
Stable points are instances for which the decision is non-arbitrary: their decisions are invariant to the specific choice of model among those considered acceptable, which is strong justification for the given decision. 
The fraction of stable $\topK$-selected points out of $\topK$ is a useful quantifier of the arbitrariness of a predictive allocation task.  For instance, if this fraction is very low, this may highlight a need for further principled deliberation on the specific choice of model, or affect the willingness to adopt a predictive model for the given allocation task.

\begin{figure*}[th]
    \centering
    \begin{subfigure}{.73\textwidth}
        \centering
        \includegraphics[width=\textwidth]{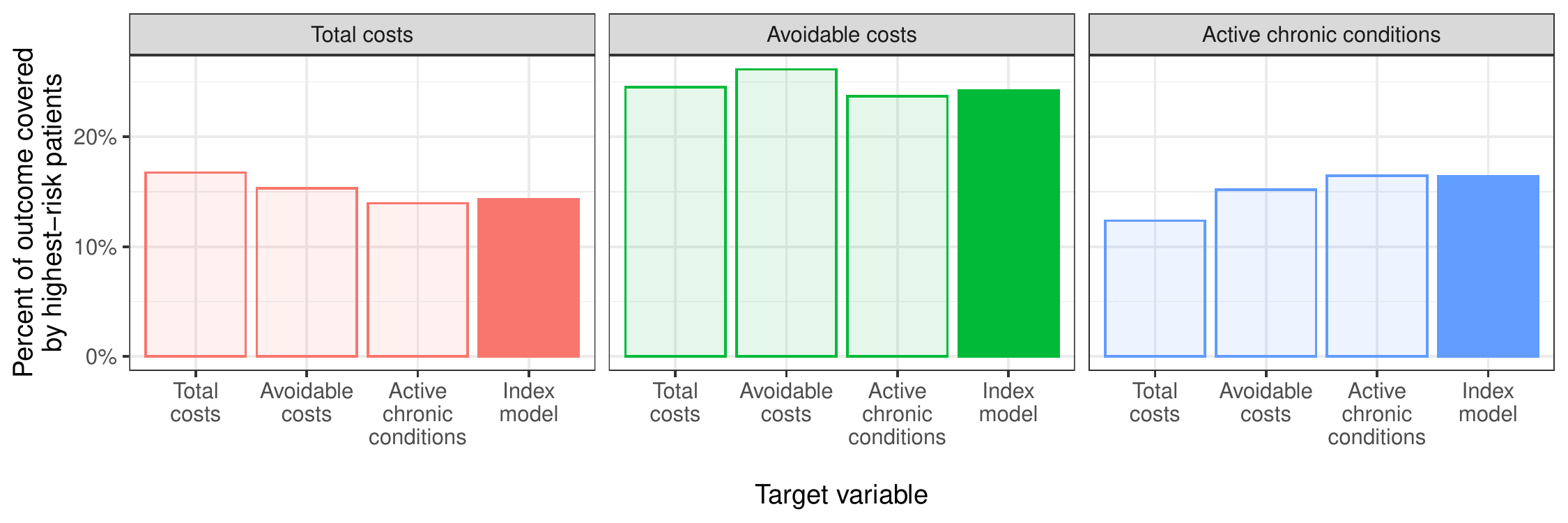}
        \caption{}
        \label{healthcare_concentration_table_as_bar_plot_left}
    \end{subfigure}
    \begin{subfigure}{.24\textwidth}
        \centering
        \includegraphics[width=\textwidth]{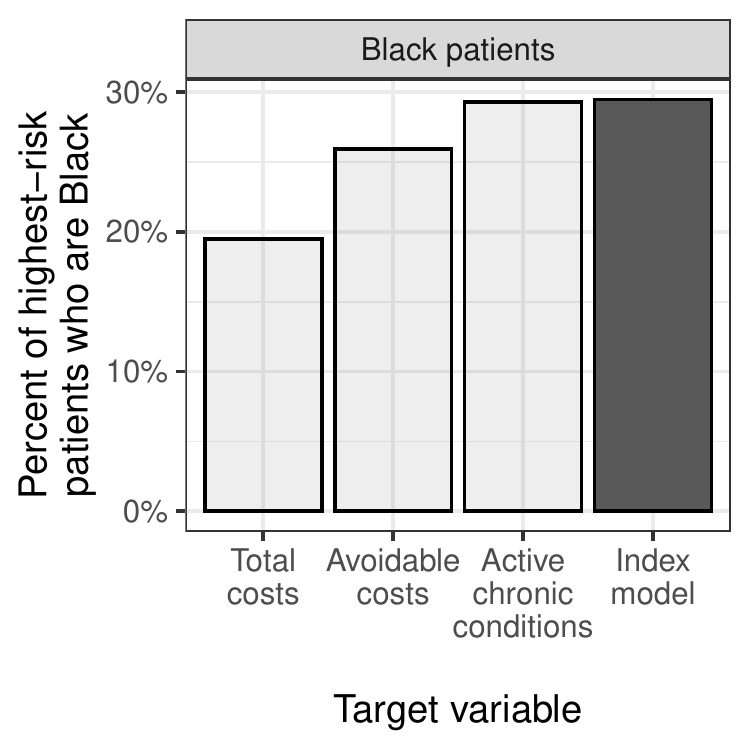}
        \caption{}
        \label{healthcare_concentration_table_as_bar_plot_right}
    \end{subfigure}
    \vspace{-10pt}
    \caption{(A) The concentration of various outcomes under models optimized for different targets. Each panel shows the percent of an outcome captured by the highest-risk patients relative to the entire outcome distribution across all patients. Each bar represents one type of model. The transparent bars depict models trained to predict individual targets, whereas the solid bars depict the index model, which re-weights the individual predictions to maximize fairness. (B) The percent of Black patients among highest-risk patients identified by each model.}
    \label{fig:healthcare_concentration_table_as_bar_plot}
\end{figure*}

\section{Evaluation}
\label{sec:eval}
In this section, we apply the techniques developed above to the healthcare dataset analyzed by Obermeyer et al. to better understand the opportunities afforded by multiplicity among multiple target variables.
First, we describe the dataset in more detail and then apply our multi-target multiplicity framework to it. We then construct a semi-synthetic version of this dataset to develop intuition for the conditions under which we should (or should not) expect to see gains from index models. Finally, we compare the degree of multiplicity that arises in resource constrained settings under a single target to the same under multiple targets.
Throughout this section, we solve all integer programs with Gurobi v.9.5.2~\citep{gurobi}. Our software implementation is at \url{https://github.com/JWatsonDaniels/multitarget-multiplicity}.

\subsection{Dataset}
We demonstrate our framework on a dataset released by Obermeyer et al., which is unique in several ways.
The original paper examines patient data for all primary care patients at a large academic hospital.
However, due to the sensitivity of the data, the authors were unable to release the dataset in its original form.
Instead, they created a publicly available semi-synthetic version of the dataset that is designed to closely mirror the original dataset.\footnote{\url{https://gitlab.com/labsysmed/dissecting-bias}}\looseness=-1

The released dataset contains several related but different outcomes for patients in a given year including total healthcare costs, avoidable healthcare costs (emergency visits and hospitalizations), and number of active chronic illnesses.
It also contains a rich set of features about each patient, including demographics (age, sex, race) and information about the patient's health and healthcare costs in the previous year.
Specifically, there are indicators for individual chronic illnesses that a patient had in the previous year, costs claimed by the patients' insurer in the previous year, biomarkers for medical tests from the previous year, and medications taken in the previous year.

In the original paper, Obermeyer et al. examine a proprietary scoring system used by the hospital to identify high-risk patients.
The risk scores are generated by a model designed to predict healthcare costs in the current year based on patient demographics and healthcare information available from the previous year.
In particular, patients who are assigned risk scores that fall in the 97th percentile or above (i.e., the top 3\% of assigned scores) are automatically identified for inclusion in the hospital's ``high-risk care management'' program.

The authors examine the assigned risk scores in detail and show that they contain a significant racial bias.
Specifically, they find that Black patients at a given risk score have worse health outcomes, on average, than their White counterparts.
The authors trace this bias back to the choice of predicting healthcare costs as the target variable.
Due to differences in access to healthcare, White patients tend to have higher healthcare costs, on average, than Black patients of similar health.
This difference is then reflected in the developed risk score, leading to the observed racial bias.
Obermeyer et al. then go on to show that there are different target variable choices that exhibit less of a racial bias---specifically using either avoidable costs or active chronic illnesses as a target instead of total costs.

\begin{figure*}[!t]
     \centering
     \begin{subfigure}{.25\textwidth}
         \centering
         \includegraphics[width=\textwidth, height=0.2\textheight]{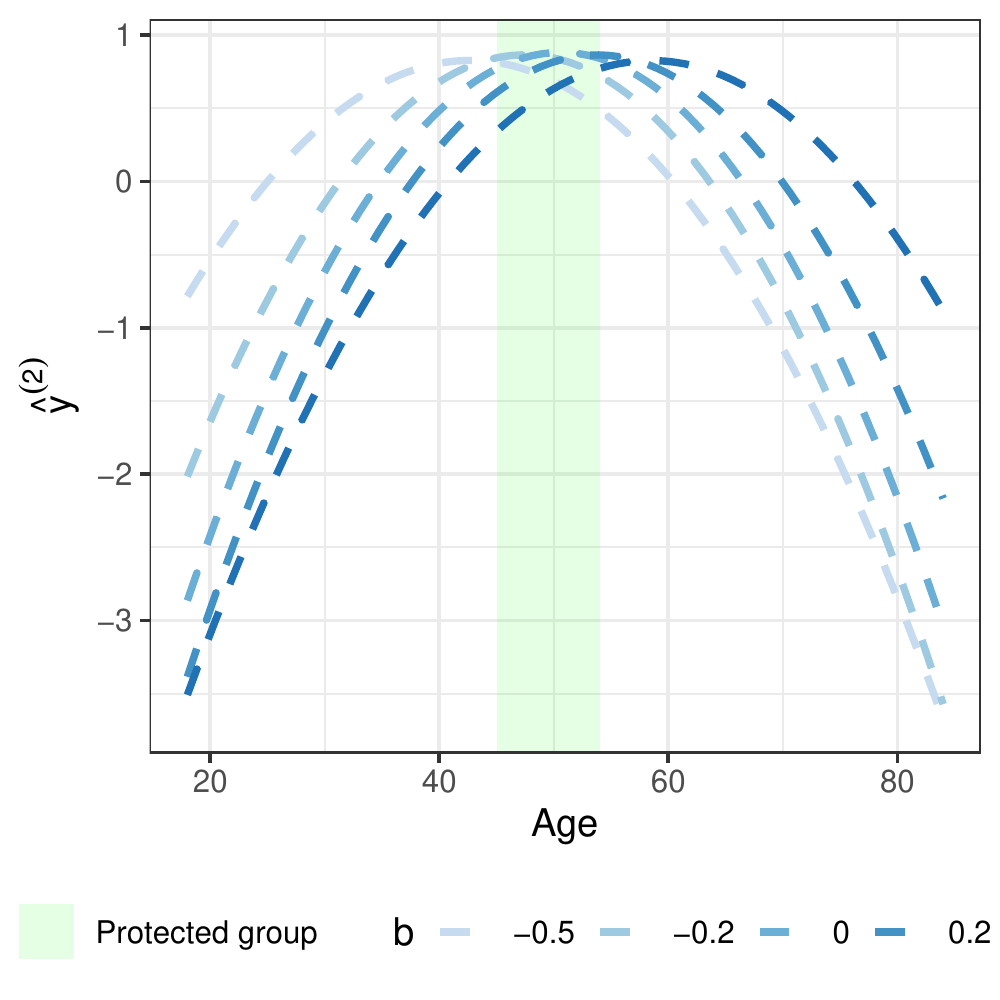}
         \caption{}
         \label{fig:synthetic_data_schematic}
     \end{subfigure}
     \begin{subfigure}{0.45\textwidth}
         \centering
         \includegraphics[width=\textwidth,height=0.2\textheight]{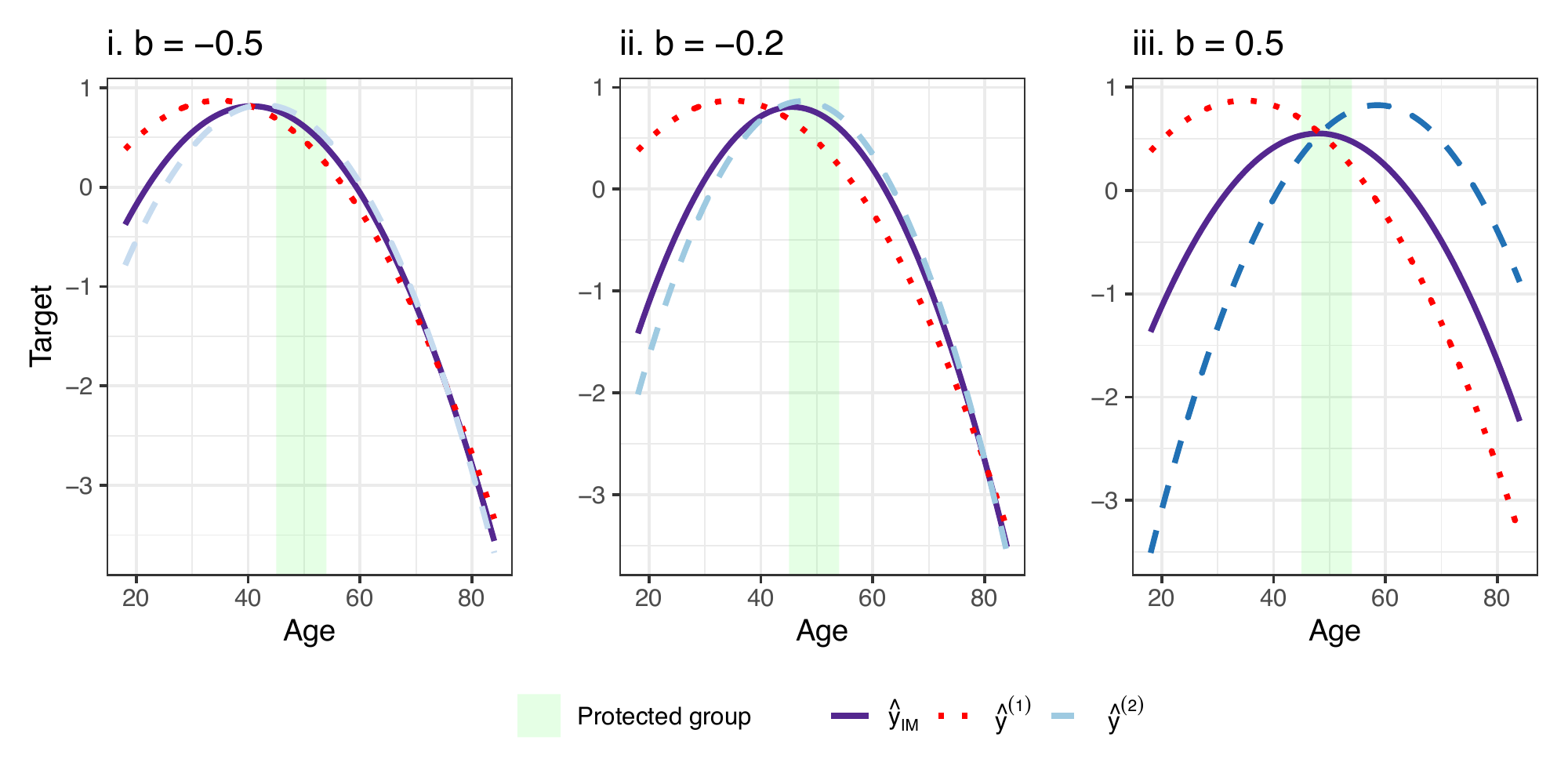}
         \caption{}
         \label{fig:synthetic_data_group_selection_rateB}
     \end{subfigure}
     \begin{subfigure}{0.25\textwidth}
         \centering
         \includegraphics[width=\textwidth,height=0.2\textheight]{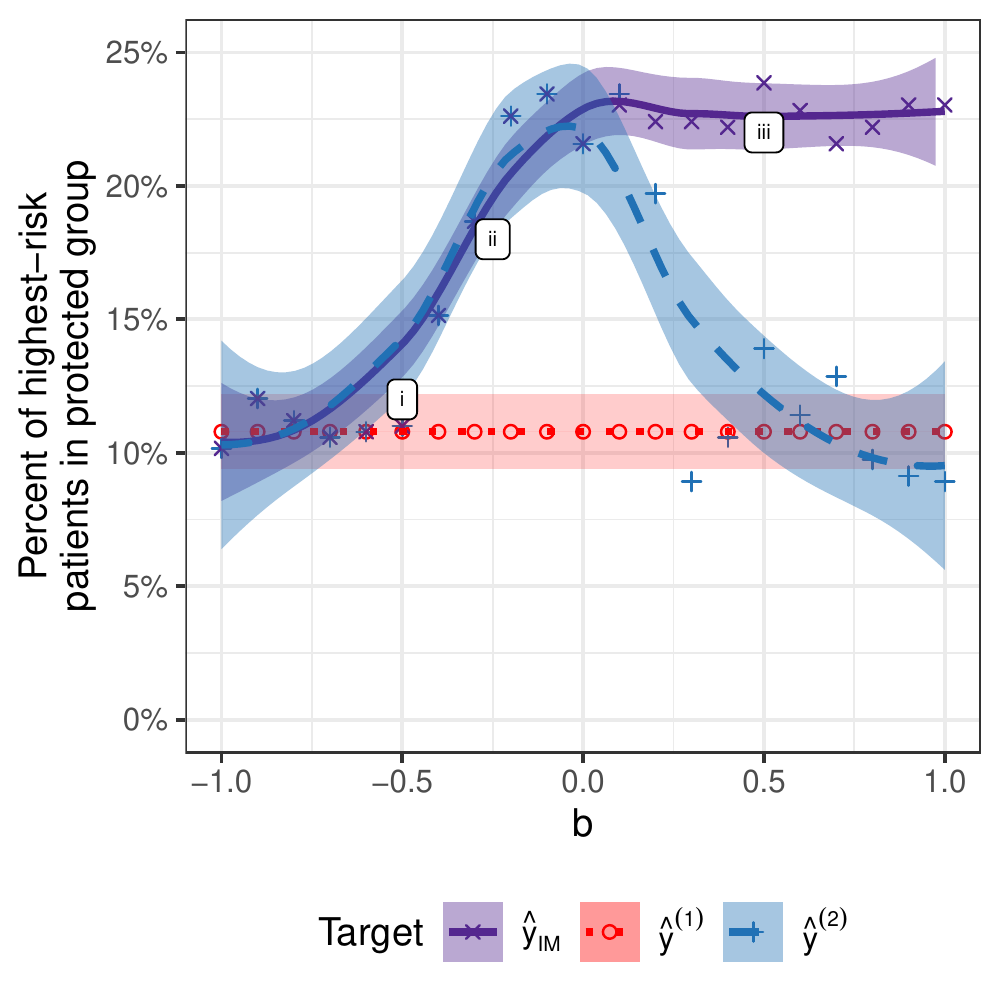}
         \caption{}
         \label{fig:synthetic_data_group_selection_rateC}
     \end{subfigure}
        \vspace{-10pt}
        \caption{(A) A semi-synthetic family of models $\yhat{2}$, which go from negatively correlated with age $(b < 0)$ to positively correlated with age $(b > 0)$. Patients in the protected group are concentrated in the middle age range, indicated by the green band.
        (B) A closer look at all three models for three different values of $b$. In the left panel, none of the models peak in the middle age range. In the middle, the index model $\hat{y}_{IM}$ (solid purple) learns to ignore $\yhat{1}$ (dotted red) in favor of $\yhat{2}$ (dashed blue) to capture more of the protected group. On the right, neither of the individual models peak to capture the protected group, but the index model averages them to do so.
        (C) A more detailed look at the concentration of the protected group found by each model over the range of $b$ values, showing that the index model dominates either individual model over the entire range. %
        }
        \label{fig:semisynthetic_index_model}
\end{figure*}

\subsection{Optimizing across healthcare outcomes}
We present a re-examination of this healthcare dataset to further explore the ways in which flexibility in target variable choice can be used to address fairness concerns.
Obermeyer et al. consider using one of each of the three different target variables, which in our framework corresponds to an index model with binary $\alpha$ weights.
For example, the cost model can be thought of as $\hat y_{IM} = 1 \cdot \yhat{\mathrm{costs}} + 0 \cdot \yhat{\mathrm{avoidable~costs}} + 0 \cdot \yhat{\mathrm{active~illnesses}}$.
However, these are just three extremes among the possible set of index models that can be formed with a continuous $\alpha$ to create a weighted average of the three available target variables.

Our analysis explores whether exercising these extra degrees of freedom can lead to more equitable outcomes.
To address this, we replicate and extend the analysis in Table 2 of the original paper, using the released dataset.\footnote{The original table is generated using the proprietary, unreleased data. Replicating the table with the released dataset produces similar, but not identical results for this reason.}
Specifically, we train separate models to predict each of the three target variables (healthcare costs, avoidable costs, and active chronic illnesses) and use the fitted models to rank a held-out set of patients.\footnote{We use the train/holdout set specified by the authors in the released dataset. We train OLS linear regression models for each variable. In order to do so, we remove several co-linear features provided in the released dataset, detailed in Appendix~\ref{sec:dataset_details}.}

We identify the top 3\% of highest-risk patients according to each of the models and look at the concentration of outcomes and the racial composition of the identified patients.
For instance, when considering total costs, we compute what percent of all costs (across all patients) are covered by just the highest-risk patients.
When considering active chronic illnesses, we instead compute the fraction of all illnesses (across all patients) covered by this set.
We then extend these results by running the multi-target fairness mixed-integer programming (MIP), $\verb|GroupSelectRateTopKMultiMIP|$, to search for an index model that maximizes the fraction of Black patients concentrated among the highest-risk set.
Appendix ~\ref{sec:index_model_diagram} outlines more details of this process.

The results are displayed in Fig.~\ref{fig:healthcare_concentration_table_as_bar_plot} and show several key observations.
First, we see that, as expected, the model trained to predict a given individual outcome has the largest concentration of that outcome in the high-risk patient set.
Notice that the transparent bars on the far left panel of Fig.~\ref{fig:healthcare_concentration_table_as_bar_plot}A show that the model trained to predict total cost is the one that has the highest concentration of total costs in the high-risk patient set.
Conversely, on the far right panel we see that modeling active chronic conditions produces the highest concentration of current illnesses in the high-risk set.
Second, despite these differences we see comparatively small variation in outcome concentration across different target variable choices, with less than a 5 percentage point difference across models in the first three panels.
But, we do see a substantial difference in the racial composition of the high-risk set, as indicated in Fig.~\ref{fig:healthcare_concentration_table_as_bar_plot}B---a more than 10 percentage point difference.

The index model, in comparison, is shown in the solid bars of Fig.~\ref{fig:healthcare_concentration_table_as_bar_plot} for $\alpha = \left( 0.05, 0.0, 0.95\right)$.
By comparing the solid bars to the transparent ones, we see that the index model does a reasonable job of capturing each of the individual targets that it is comprised of, but also produces a high-risk set with a high concentration of Black patients, as per the objective of the multi-target group selection formulation \eqref{eq::multiFAIR}.
In effect, this represents a ``best of both worlds'' solution: we are able to fit separate models that are useful for predicting the three outcomes that may be of interest on their own (i.e., for budgeting purposes), but we also arrive at a way of ranking patients that results in a more equitable allocation of a scarce resource via the index model.

\subsection{Exploring the conditions for effective multi-target optimization}
\label{sec:index_model_intuition}
In the example above, we found it was possible to learn an index model that combined individual target variables from the healthcare dataset to improve group selection rates.
In this section, we use semi-synthetic data to gain a better understanding of the conditions for which we might (or might not) expect to see such gains in other datasets.
To do this, we modify the healthcare dataset to systematically control the relationship between a protected group attribute $a$, a feature $x$, and the different choices of target $\yobs{k}$.
We then vary these relationships and examine how this affects the group selection rate that an index model can achieve.

Specifically, we construct a dataset with one feature (age) and two target variables $\yobs{1}$ and $\yobs{2}$, along with a protected attribute.\footnote{While age is considered a protected attribute under various discrimination laws, for the purposes of our evaluation in a healthcare setting, we treat it as an unprotected attribute.}
We construct the protected attribute to be non-monotonically correlated with age, with a higher concentration of patients in the protected group falling in the middle age range compared to the rest of the population.
We then construct one target variable $\yobs{1}$ that is negatively correlated with age and one target variable $\yobs{2}$ whose correlation with age varies from strongly positive to strongly negative, controlled by a parameter $b$, as shown in Fig.~\ref{fig:semisynthetic_index_model}A.
In this setting, prioritizing middle-aged patients maximizes the fraction of high-risk set that is in the protected group, but fitting a model to $\yobs{1}$ prioritizes young patients, resulting in a lower selection rate.
Conversely, when $b$ is large and positive, $\yobs{2}$ is positively correlated with age, and so fitting a model to it will prioritize older patients, also leading to sub-optimal group selection.
However, as we show in Figs.~\ref{fig:semisynthetic_index_model}B and \ref{fig:semisynthetic_index_model}C, an index model can be fit over a wide range of $b$ values such that the group selection rate is maximized.
The intuition is that the index model can learn to average out unhelpful correlation structure between the protected attribute and the target variables.

\begin{figure*}[!t]
     \centering
     \begin{subfigure}[b]{0.4\textwidth}
         \centering
         \includegraphics[width=\textwidth]{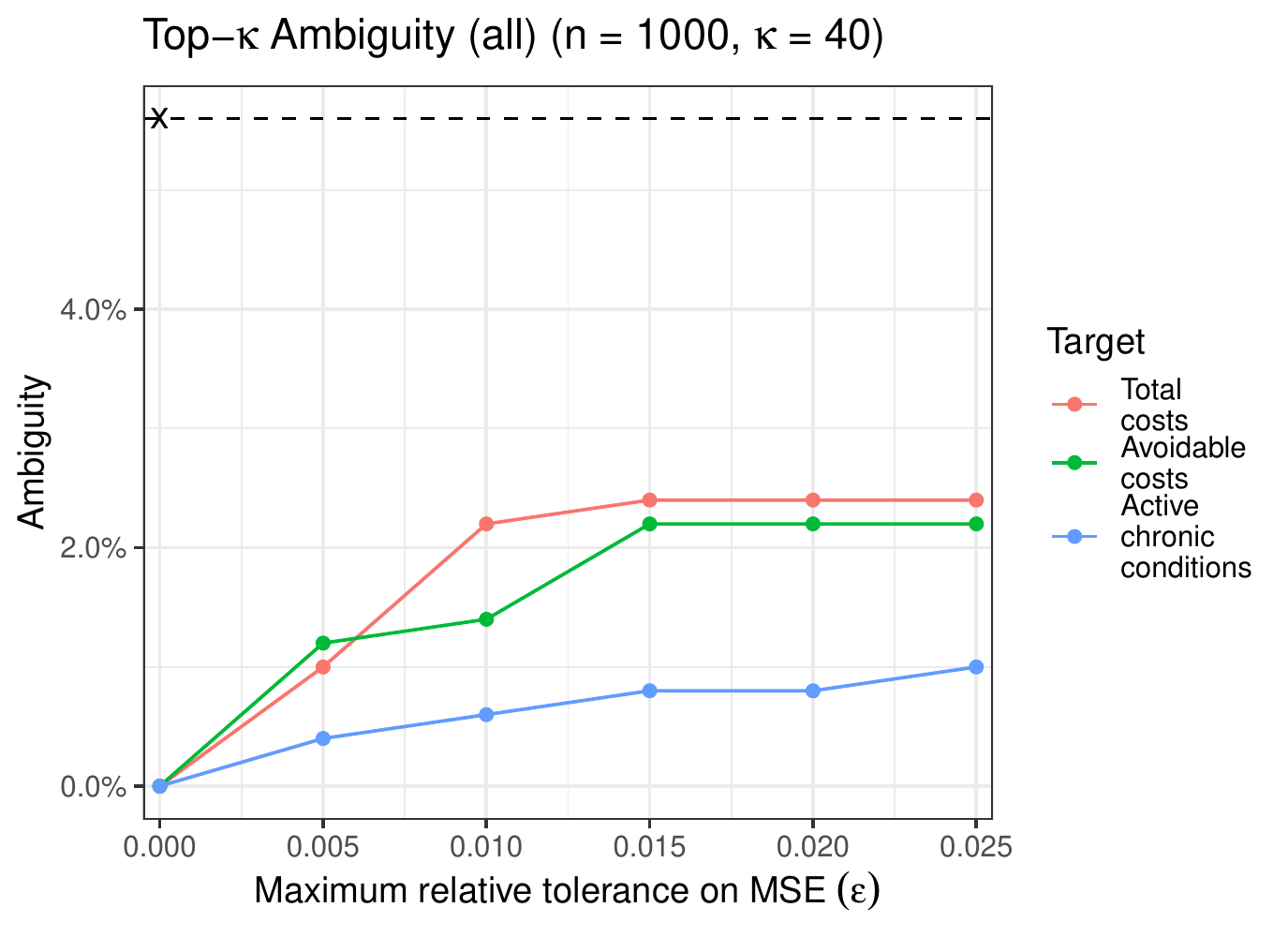}
         \caption{}
         \label{fig:healthcare_single_vs_muli_target_ambiguity}
     \end{subfigure}\hspace{2em}
     \begin{subfigure}[b]{0.4\textwidth}
         \centering
         \includegraphics[width=\textwidth]{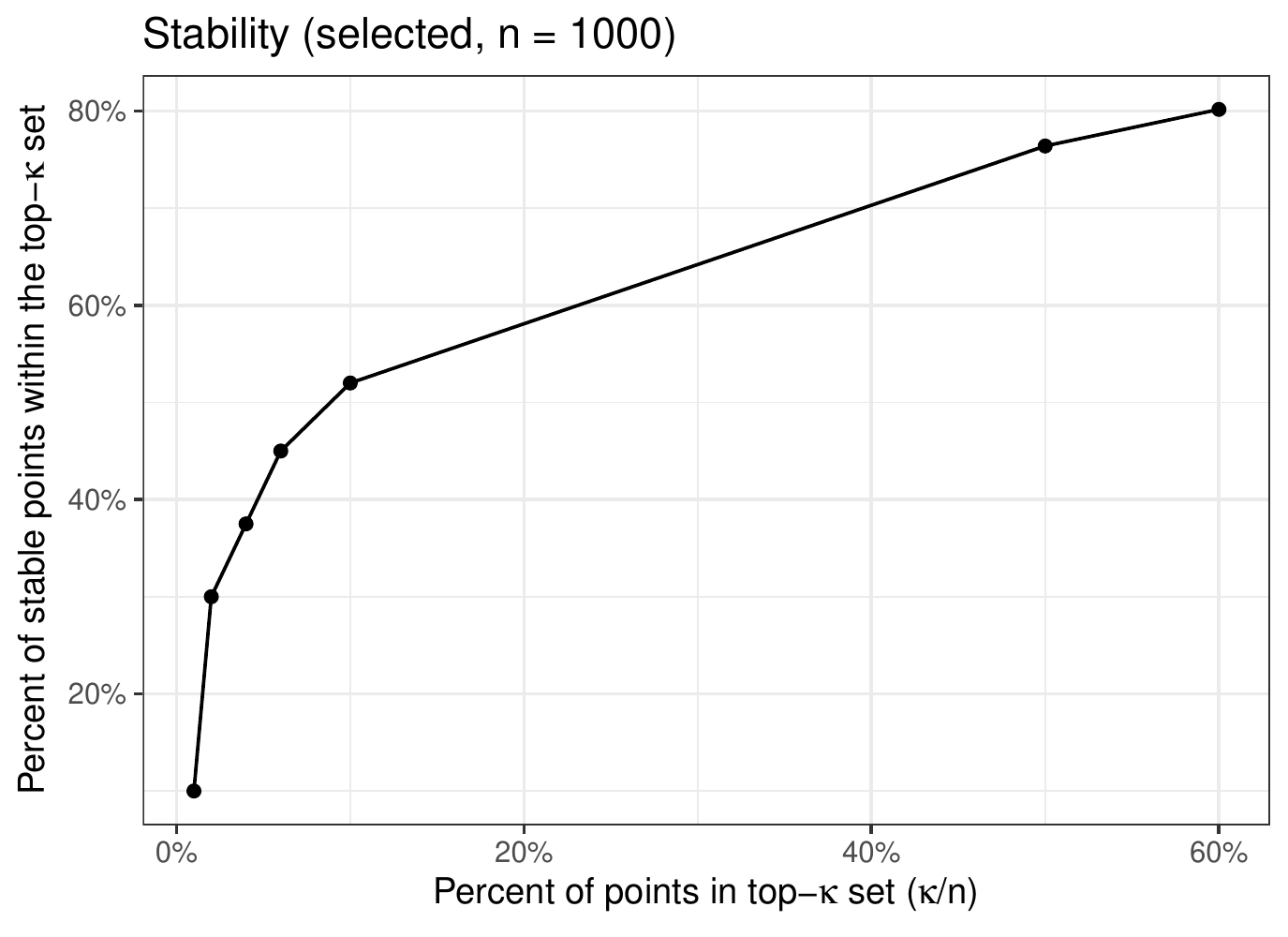}
         \caption{}
         \label{fig:healthcare_multi_target_stability}
     \end{subfigure}
        \vspace{-10pt}
        \caption{(A) Comparison of multiplicity within vs. between targets. Ambiguity within each individual target is shown by the colored lines at different relative mean squared error tolerances $(\epsilon)$. Ambiguity across the three targets, shown by the black 'x' and dotted line, is much higher than ambiguity for any individual target. (B) Stability of points within the top-$\kappa$ set as $\kappa$ is increased. Even with relatively small values of $\kappa$, we find a sizeable set of stable points that, no matter how targets are combined, fall in the top-$\kappa$ set.}
        \label{fig:multi_target_ambiguity}
\end{figure*}

\subsection{Multi-target versus individual-target multiplicity}
Finally, we compare the latitude afforded by across-target multiplicity to that for within-target multiplicity.
To do this we return to the original dataset released by Obermeyer et al. and work with a subset of the features and examples for computational efficiency, as described in Appendix~\ref{sec:dataset_details}.

We evaluate predictive multiplicity by computing the single-target top-$\kappa$ ambiguity for each choice of target variable Eq~\eqref{Eq::Ambiguity-top-2} by running $\verb|FlipTopKMIP|$ for different error tolerances $\epsilon$. This allows us to determine the proportion of top-$\kappa$ points that can be flipped.
The results are shown in Fig.~\ref{fig:healthcare_single_vs_muli_target_ambiguity}, with each color corresponding to a choice of target variable.
From this, we see that single-target ambiguity rises quickly with $\epsilon$ and then plateaus. This is a result of the resource constrained predictive allocation setting: at a certain level, $\epsilon$, the $\epsilon$-Rashomon set contains the "flipping" model for each of the flippable points, so further increasing $\epsilon$ does not further increase ambiguity. 
We observe that the total cost variable has the highest ambiguity, slightly above 2\%, whereas active chronic conditions plateau just above 1\%. 

We compute the multi-target top-$\kappa$ ambiguity \eqref{Eq::Ambiguity-alpha} by running $\verb|FlipTopKMultiMIP|$ across the three different target variables.
This results in a multi-target ambiguity of nearly 5\%, as indicated by the black ``x'' and dashed horizontal line in Fig.~\ref{fig:healthcare_single_vs_muli_target_ambiguity}.
From this we see that the across-target multiplicity is substantially higher than the within-target multiplicity---a much higher proportion of points can be flipped into the top-$\kappa$ set by re-weighting predictions for the different targets than by entertaining slightly sub-optimal model fits for the individual targets. 

Finally, in Fig.~\ref{fig:healthcare_multi_target_stability} we look at the complement of ambiguity, examining the set of stable points that remain in the top-$\kappa$ set over all possible index models, as defined in  ~\textsection\ref{sec:stable-points}. Specifically, we plot the percent of stable points in the top-$\kappa$ set as we increase $\kappa$ to cover more of the entire dataset. Interestingly we see that the fraction of stable points grows rapidly with $\kappa$. 
For instance, when combining individual targets under an index model to select the top 10\% of highest risk patients, more than half of the selected patients are the same \textit{regardless of which index model is used}. 
This invariance lends confidence to the decision to prioritize such patients.

\section{Concluding Remarks}
In this paper, we introduced frameworks for assessing the level of multiplicity present in predictive allocation tasks in both the single-target and multi-target setting.  
First, we show how to measure multiplicity for a given target variable in settings where decision makers face constraints that limit the total number of people who can receive a scarce resource. 
Second, we show that when faced with a choice of multiple target variables, practitioners can develop index models that address fairness concerns by re-weighting and combining predictions for each target.
Our empirical results show that both of these methods are effective for narrowing racial disparities in selection rates in allocating patients to a high-risk coordinated care management program. Notably, we find that the latitude afforded by re-weighting predictions across target variables is substantially larger than the flexibility provided by leveraging within-target multiplicity.  This may represent a ``best of both worlds'' solution: we are able to fit separate models for predicting outcomes that may be interesting to model in their own right, but we can also combine the predictions from these models to allocate resources more equitably.

\begin{acks}
We thank David C. Parkes and members of the FATE research group at Microsoft Research for feedback and helpful discussions. JWD is supported by a Ford Foundation Pre-doctoral Fellowship and the NSF  Graduate Research Fellowship Program under Grant No. DGE1745303. Any opinions, findings, and conclusions or recommendations expressed in this material are those of the author(s) and do not necessarily reflect the views of the NSF.
\end{acks}

\clearpage

\bibliographystyle{ACM-Reference-Format}
\bibliography{new_references.bib}

\appendix
%
%
\clearpage

\onecolumn
\appendix

\section{Technical details for ranked single target multiplicity}
\label{sec:app-single-target-details}

This section presents the derivations and proofs of technical results appearing in ~\textsection\ref{sec::multiplicity}.  

\subsection{$\epsilon$-Rashomon set for linear regression}

In the \textsection\ref{sec::multiplicity} on single-target resource constrained predictive multiplicity, we repeatedly use the fact that, for orthonormal design matrices, $X$, the $\epsilon$-Rashomon set is given by
\[
\epsset{w_0} = \{w \in \R^{d+1} : \|w_0 - w\|_2^2 \le \epsilon\}.
\]
Here we provide a quick proof, which follows as a Corollary of Theorem 10 in \citet{Semenova2019a}.  

\begin{proof}
    Unpenalized linear regression is a special case of ridge regression 
    \[
\min_w L(w;\lambda) = \min_w (y - Xw)^T(y - Xw) + \lambda\|w\|_2^2,
    \]
    with $\lambda = 0$.  Part 1 of Theorem 10 of \citet{Semenova2019a} shows that the $\epsilon$-Rashomon set for ridge regression is,
    \[
    \epsset{w_0; X, \lambda} = \{w : (w - w_0)^T \left(X^TX + \lambda I_{d+1}\right)(w - w_0) \le \epsilon\}.
    \]
    For orthonormal designs, $X^TX = I_{d+1}$. This, combined with taking $\lambda = 0$ to recover the unpenalized linear regression setting gives the stated result.
\end{proof}

\subsection{$M$ bound in constraints \eqref{alt2a} and  \eqref{alt2b}}
\label{sec:app-M-bound-single-target}

To ensure that $\predIndic_{(\twopt > \onept)}$ whenever $(x_{\twopt} - x_\onept)^T w = \hat y (x_\twopt) - \hat y (x_\onept) > 0$ we need to choose $M$ so that 
\[
M \ge \hat y (x_\twopt) - \hat y (x_\onept) \quad \forall \twopt, \onept, \ \text{ and } \quad  \forall w \in \epsset{w_0}
\]

\begin{proposition}
\[
\hat y (x_\twopt) - \hat y (x_\onept) \le \left( \sqrt{\|w_0\|_2^2 + \epsilon} \right) \max_{i, j}\|x_j - x_i\|_2  \quad  \forall \twopt, \onept, \ \text{ and } \quad  \forall w \in \epsset{w_0}
\]
\end{proposition}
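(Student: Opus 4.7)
The plan is to reduce the claim to a uniform bound on $\|w\|_2$ over the $\epsilon$-Rashomon set, via Cauchy--Schwarz. For any $w \in \epsset{w_0}$,
\begin{equation*}
\hat y(x_\twopt) - \hat y(x_\onept) = (x_\twopt - x_\onept)^T w \;\le\; \|x_\twopt - x_\onept\|_2 \cdot \|w\|_2 \;\le\; \max_{i,j}\|x_j - x_i\|_2 \cdot \|w\|_2,
\end{equation*}
so it suffices to show $\|w\|_2 \le \sqrt{\|w_0\|_2^2 + \epsilon}$ uniformly over $\epsset{w_0}$.

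By Proposition 3.5, after orthonormalizing $X$, we have $\epsset{w_0} = \{w : \|w - w_0\|_2 \le \sqrt{\epsilon}\}$. Writing $w = w_0 + \delta$ with $\|\delta\|_2 \le \sqrt{\epsilon}$ yields $\|w\|_2^2 = \|w_0\|_2^2 + 2\langle w_0, \delta\rangle + \|\delta\|_2^2$. The target bound $\|w\|_2^2 \le \|w_0\|_2^2 + \epsilon$ is immediate whenever the cross term $\langle w_0, \delta\rangle$ is non-positive, which is the easy half of the proof. The hard part will be handling the case $\langle w_0, \delta\rangle > 0$: over the whole Euclidean ball of radius $\sqrt{\epsilon}$ this term can reach $\sqrt{\epsilon}\|w_0\|_2$, so the stated constant is \emph{not} a direct consequence of Proposition 3.5 alone.

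I would therefore proceed in two stages. First, derive the clean triangle-inequality bound $\|w\|_2 \le \|w_0\|_2 + \sqrt{\epsilon}$, which immediately yields a valid (if slightly looser) upper bound $(\|w_0\|_2 + \sqrt{\epsilon}) \max_{i,j}\|x_j - x_i\|_2$; this alone is sufficient to certify correctness of the Big-M constraints \eqref{alt2a}--\eqref{alt2b}. Second, attempt to sharpen to $\sqrt{\|w_0\|_2^2 + \epsilon}$ by exploiting that the worst-case $w$ maximizing $(x_\twopt - x_\onept)^T w$ has $\delta$ aligned with $x_\twopt - x_\onept$ (cf.\ Proposition 3.8), so the cross term reduces to $\sqrt{\epsilon}\langle w_0, (x_\twopt - x_\onept)/\|x_\twopt - x_\onept\|_2\rangle$; checking whether the interaction of this quantity with the Cauchy--Schwarz slack suffices to close the gap is the main technical obstacle. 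If this geometric argument does not go through, the honest route is to restate the proposition with the triangle-inequality form, which suffices for every subsequent use of $M$ in the paper.
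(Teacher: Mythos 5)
Your reduction via Cauchy--Schwarz to a uniform bound on $\|w\|_2$ over $\epsset{w_0}$ is exactly the route the paper takes, and your skepticism about the constant is warranted: the paper's own proof bounds $\|w\|_2 = \sqrt{\|w_0 + (w - w_0)\|_2^2} \le \sqrt{\|w_0\|_2^2 + \|w - w_0\|_2^2}$, which is the inequality $\|a+b\|_2^2 \le \|a\|_2^2 + \|b\|_2^2$ --- valid only when $\langle a, b\rangle \le 0$, precisely the cross-term issue you isolate. Your first stage (the triangle inequality $\|w\|_2 \le \|w_0\|_2 + \sqrt{\epsilon}$, hence $M = (\|w_0\|_2 + \sqrt{\epsilon})\max_{i,j}\|x_j - x_i\|_2$) is correct and, as you note, entirely sufficient for the Big-M constraints, since any valid upper bound works there.

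Your second stage, however, cannot be made to close the gap, because the proposition is false as stated rather than merely hard to prove with this technique. A one-dimensional check confirms this: with $d = 1$, $w_0 = (0,1)$, $\epsilon = 1$, the model $w = (0,2)$ lies in $\epsset{w_0}$, and for $x_{\twopt} - x_{\onept} = (0,1)$ one gets $(x_{\twopt} - x_{\onept})^T w = 2$, whereas the claimed bound is $\sqrt{\|w_0\|_2^2 + \epsilon}\,\|x_{\twopt} - x_{\onept}\|_2 = \sqrt{2}$. When $w_0$, $w - w_0$, and $x_{\twopt} - x_{\onept}$ are all aligned, both the Cauchy--Schwarz step and the cross term are tight simultaneously, so no geometric refinement of the kind you sketch (aligning $\delta$ with $x_{\twopt} - x_{\onept}$) can rescue the constant. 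The honest route you propose --- restating the proposition with $\|w_0\|_2 + \sqrt{\epsilon}$ in place of $\sqrt{\|w_0\|_2^2 + \epsilon}$ --- is the correct resolution, and it leaves every downstream use of $M$ in the MIP intact.
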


\begin{proof}
\[
\max_{w \in \epsset{w_0}} \hat y_\twopt - \hat y_\onept  
= \max_{w \in \epsset{w_0}} (x_\twopt - x_\onept)^T w
\]
By Cauchy-Schwartz,
\[
(x_\twopt - x_i)^T w \le \|x_\twopt - x_i\|_2 \|w\|_2 \le \|x_\twopt - x_i\|_2 \max_{w \in \epsset{w_0}} \|w\|_2.
\]
Noting that 
\[
\|w\|_2 = \sqrt{\|w_0 + (w - w_0)\|_2^2} 
\le \sqrt{\|w_0\|_2^2 + \|(w - w_0)\|_2^2} 
\le \sqrt{\|w_0\|_2^2 + \epsilon} \quad \forall w \in R_\epsilon(w_0),
\]
we therefore get that, 
\[
M_i = \max_{\twopt} \max_{w \in R_\epsilon} \hat y_{\twopt} - \hat y_{i}  
\le \left( \sqrt{\|w_0\|_2^2 + \epsilon} \right) \max_{ \twopt}\|x_\twopt - x_i\|_2 .
\]
Taking the maximum over all $\twopt$ gives the desired result,
\[
M = \max_{i, \twopt} \max_{w \in \epsset{w_0}} \hat y_{\twopt} - \hat Y_{i}  
\le \left( \sqrt{\|w_0\|_2^2 + \epsilon} \right) \max_{i, \twopt}\|x_\twopt - x_i\|_2 .
\]

\end{proof}

Note that the proof shows that one can set $M_i$ differently for each point $\onept$ we are aiming to flip in the given run of the MIP.

\subsection{Identifying certifiably (un)flippable points without solving a MIP}
\label{sec:app-unflippable}

\begin{proof}[Proof of Proposition \ref{prop:pred-gap-bound}]
\begin{align*}
    \Delta_{i, \twopt}(\wb) &= x_{i}^T\wb - x_\twopt^T\wb = (x_{i} - x_i )^\top \wb \\
    &= (x_{i} - x_i )^\top \wb + (x_{i} - x_\twopt )^\top \hat{\wb} - (x_{i} - x_i )^\top \hat{\wb} \\
    &= (x_{i} - x_\twopt )^\top (\wb - \hat{\wb} )+ (x_{i} - x_\twopt )^\top \hat{\wb} 
\end{align*}

By Cauchy-Schwartz,
\begin{align*}
\left|(x_{i} - x_\twopt)^T(\wb - \hat \wb)\right| & \le \|x_{i} - x_\twopt\|_2 \| \wb - \hat\wb\|_2 \\
&\le \sqrt{\epsilon}\|x_{i} - x_\twopt\|_2,
\end{align*}
where in the second step we use the fact that $\wb \in \epsset{w_0}$.

Thus $\forall \wb \in \epsset{w_0}$,
\[
\Delta_{i, \twopt}(\wb) \le \Delta_{i, \twopt}(\hat \wb) + \sqrt{\epsilon}\|x_{i} - x_\twopt\|_2 = B(i, \twopt; \epsilon).
\]
So if $B(i, \twopt; \epsilon) <0$, we have $\Delta_{i, \twopt}(\wb) < 0 \ \forall w \in \epsset{w_0}$.

\end{proof}

\begin{proof}[Proof of Corollary \ref{cor:unflippable}]
    $\{i': B(i, \twopt; \epsilon) < 0)\} \ge \kappa$ means that there are at least $\topK$ points for which \\ \mbox{$\Delta_{i, \twopt}(\wb) < 0 \, \forall \wb \in \epsset{w_0}$}, so $i$ cannot be in the top-$\topK$ set for any model in the $\epsilon$-Rashomon set.  
\end{proof}

\begin{proof}[Proof of Proposition \ref{prop:max-yhat}]
Let 
\[
w^* = w_0 + \sqrt{\epsilon}\frac{x_i}{\|x_i\|_2}.
\]
We will show that $\forall w \in \epsset{w_0}$, $\hat y_i(w) \le \hat y_i(w^*)$.  By construction,
\[
\hat y_i(w^*) = x^Tw_0 + \sqrt{\epsilon} \|x_i\|_2.
\]
By Cauchy-Schwartz, for any $w \in\epsset{w_0}$
\begin{align*}
\hat y_i(w) 
&= x_i^Tw_0 + x_i^T(w - w_0) \\
&\le x_i^Tw_0 +  \|x_i\|_2\|w - w_0\|_2 \\
&\le x_i^Tw_0 +  \sqrt{\epsilon}\|x_i\|_2 \\
&= \hat y_i(w^*)
\end{align*}
\end{proof}

\section{Technical details for multi-target multiplicity and fairness}

\subsection{Equivalence of index model and index variable approaches for linear models}
\label{sec:app-im_iv_equiv_proof}

\begin{proof}[Proof of Proposition \ref{prop:equiv-im-iv-linear}]
Starting with the index variable definition, we get that
\[
\hat y_{IV}^{(\alpha)} = M_X \yobs{\alpha} 
= M_X \left( \sum_{k = 1}^K \alpha_k  \yobs{k} \right)
= \sum_{k = 1}^K \alpha_k M_X \yobs{k}
= \sum_{k = 1}^K \alpha_k \hat y^{(k)} 
= \hat y_{IM}^{(\alpha)}
\]
\end{proof}

\subsection{Identifying certifiably (un)flippable points in the multi-target setting without solving a MIP}
\label{sec:app-unflippable-IM}

\begin{proposition}[Prediction gap bound for index models.]
    Let $\Delta_{\onept, \twopt}(\alpha) := \hat y_{IM}(x_i;\alpha) - \hat y_{IM}(x_{i'};\alpha)$ to be the prediction gap between instances $\twopt$ and $\onept$ under combining parameters $\alpha$.  For all $\onept, \twopt$ and $\alpha, \alpha' \in \Simplex{K}$,
    \[
    \Delta_{\onept,\twopt}(\alpha) \le  \Delta_{\onept,\twopt}(\alpha') + \sum_{k = 1}^K |\yhat{k}(x_i) - \yhat{k}(x_{i'}) |
    =: B_{IM}(\onept, \twopt; \alpha).
    \] 
\label{prop:pred-gap-bound-IM}
\end{proposition}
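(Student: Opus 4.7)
The plan is to exploit the fact that the index model $\hat{y}_{IM}(x; \alpha) = \sum_k \alpha_k \hat{y}^{(k)}(x)$ is \emph{linear in} $\alpha$, so that the prediction gap $\Delta_{i,i'}(\alpha)$ is also linear in $\alpha$. This means the difference between the gap at $\alpha$ and the gap at $\alpha'$ telescopes coordinate-wise:
\begin{equation*}
  \Delta_{i,i'}(\alpha) - \Delta_{i,i'}(\alpha') \;=\; \sum_{k=1}^K (\alpha_k - \alpha'_k)\,\bigl(\hat{y}^{(k)}(x_i) - \hat{y}^{(k)}(x_{i'})\bigr).
\end{equation*}
This is the key algebraic identity I will open with, and the rest of the proof is a direct bounding exercise.

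Next I would apply the triangle inequality to the sum above and invoke the fact that $\alpha, \alpha' \in \mathbb{S}^K$, so every coordinate satisfies $\alpha_k, \alpha'_k \in [0,1]$, and hence $|\alpha_k - \alpha'_k| \le 1$. This yields
\begin{equation*}
  \bigl|\Delta_{i,i'}(\alpha) - \Delta_{i,i'}(\alpha')\bigr| \;\le\; \sum_{k=1}^K |\alpha_k - \alpha'_k|\cdot\bigl|\hat{y}^{(k)}(x_i) - \hat{y}^{(k)}(x_{i'})\bigr| \;\le\; \sum_{k=1}^K \bigl|\hat{y}^{(k)}(x_i) - \hat{y}^{(k)}(x_{i'})\bigr|.
\end{equation*}
Dropping the absolute value on the left-hand side and rearranging gives the one-sided inequality in the statement, so the argument is structurally parallel to the Cauchy--Schwartz step used in Proposition~\ref{prop:pred-gap-bound}, with the role of $\sqrt{\epsilon}$-control over $\|w - w_0\|_2$ now played by the $[0,1]$-bound on simplex coordinates.

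There is essentially no deep obstacle here: the proof is a two-line calculation once the linearity in $\alpha$ is noted. The only conceptual subtlety worth flagging in the write-up is that the bound is not particularly tight because it throws away the simplex structure $\sum_k (\alpha_k - \alpha'_k) = 0$; one could in principle sharpen it (e.g.\ via a total-variation-style constant $\tfrac{1}{2}\sum_k |\alpha_k - \alpha'_k|$ multiplied by the range of $\hat{y}^{(k)}(x_i) - \hat{y}^{(k)}(x_{i'})$ across $k$), but the looser form stated is what is needed downstream to derive a corollary identifying provably unflippable points analogous to Corollary~\ref{cor:unflippable}, namely the test $\#\{i' : B_{IM}(i,i';\alpha') < 0\} \ge \kappa$ applied relative to a reference model $\alpha'$.
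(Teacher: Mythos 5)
Your proof is correct and follows essentially the same route as the paper's: the same coordinate-wise decomposition $\Delta_{\onept,\twopt}(\alpha) = \Delta_{\onept,\twopt}(\alpha') + \sum_{k}(\alpha_k - \alpha'_k)\bigl(\yhat{k}(x_\onept) - \yhat{k}(x_\twopt)\bigr)$ followed by bounding each term using $|\alpha_k - \alpha'_k| \le 1$ for simplex coordinates. The paper's proof is exactly this two-line calculation, so no further comparison is needed.
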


\begin{proof}
For any two instances $x_i, x_{\twopt} \in \X$ and combining parameter vectors $\alpha, \alpha' \in \Simplex{K}$,
\begin{align*}
\Delta_{\onept, \twopt}(\alpha) 
&= \Delta_{\onept, \twopt}(\alpha') + \sum_{k = 1}^K (\alpha_k - \alpha'_k)\left(\yhat{k}(x_i) - \yhat{k}(x_{\twopt})\right) \\
&\le \Delta_{\onept, \twopt}(\alpha') + \sum_{k = 1}^K \left| \yhat{k}(x_i) - \yhat{k}(x_{\twopt}) \right| \\
&= B_{IM}(i, \twopt; \alpha)
\end{align*}
\end{proof}

\begin{corollary}[Points that cannot appear in top-$\topK$ set for any index model] Suppose $\onept$ is not in the top-$\topK$ for an index model with parameter $\tilde \alpha$; 
 i.e., $Top_{(\onept,\tilde \alpha,\topK)} = 0$. If
$
\#\{\twopt: B(\onept, \twopt; \tilde \alpha) < 0\} \ge \topK,
$
then $Top_{(\onept,w,\topK)} = 0  \ \ \forall \alpha \in \Simplex{K} $.
\label{cor:unflippable-IM}
\end{corollary}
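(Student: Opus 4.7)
\noindent\textbf{Proof proposal for Corollary \ref{cor:unflippable-IM}.}  The plan is to mimic the logic of Corollary \ref{cor:unflippable} from the single-target setting, substituting the index-model prediction gap bound from Proposition \ref{prop:pred-gap-bound-IM} for the Rashomon-set prediction gap bound from Proposition \ref{prop:pred-gap-bound}. The key observation is that the bound $B_{IM}(\onept,\twopt;\tilde\alpha) = \Delta_{\onept,\twopt}(\tilde\alpha) + \sum_{k=1}^K |\yhat{k}(x_\onept) - \yhat{k}(x_\twopt)|$ controls the prediction gap uniformly in $\alpha \in \Simplex{K}$, not merely in a neighborhood of $\tilde\alpha$, which is exactly what we need to certify that $\onept$ stays out of the top-$\kappa$ for \emph{all} combining weights.

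First I would fix an arbitrary $\alpha \in \Simplex{K}$ and invoke Proposition \ref{prop:pred-gap-bound-IM} with reference parameter $\tilde\alpha$ to conclude that for every $\twopt$,
\[
\Delta_{\onept,\twopt}(\alpha) \le B_{IM}(\onept,\twopt;\tilde\alpha).
\]
Then, restricting to the set $\mathcal{S} := \{\twopt : B_{IM}(\onept,\twopt;\tilde\alpha) < 0\}$, I would observe that $|\mathcal{S}| \ge \topK$ by hypothesis and that for each $\twopt \in \mathcal{S}$ we have $\Delta_{\onept,\twopt}(\alpha) < 0$, i.e. $\hat y_{IM}(x_\onept;\alpha) < \hat y_{IM}(x_\twopt;\alpha)$. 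Consequently at least $\topK$ distinct instances strictly out-rank $\onept$ under the index model with weights $\alpha$, which forces $\tau_\onept(\hat y_{IM}(\,\cdot\,;\alpha)) > \topK$ and therefore $Top_{(\onept,\alpha,\topK)} = 0$.

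Since $\alpha$ was arbitrary, this yields $Top_{(\onept,\alpha,\topK)} = 0$ for every $\alpha \in \Simplex{K}$, completing the argument. There is no real obstacle here: the work has been packaged into Proposition \ref{prop:pred-gap-bound-IM}, and what remains is a counting argument identical in structure to the single-target case. The only mild subtlety worth flagging is the treatment of ties, which I would handle by noting that the strict inequality $B_{IM}(\onept,\twopt;\tilde\alpha) < 0$ is preserved as a strict inequality $\Delta_{\onept,\twopt}(\alpha) < 0$, so the $\topK$ dominating instances are genuinely ranked above $\onept$ regardless of how ties among other instances are resolved.
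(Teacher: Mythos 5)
Your proof is correct and follows essentially the same route as the paper's: apply Proposition \ref{prop:pred-gap-bound-IM} with reference parameter $\tilde\alpha$ to get $\Delta_{\onept,\twopt}(\alpha) \le B_{IM}(\onept,\twopt;\tilde\alpha) < 0$ for at least $\topK$ instances $\twopt$ uniformly over $\alpha \in \Simplex{K}$, so $\onept$ is strictly outranked by at least $\topK$ points under every index model. The paper's own proof is a one-line version of exactly this counting argument; your added remarks on uniformity in $\alpha$ and on ties are fine but not substantively different.
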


\begin{proof}
    $\{i': B_{IM}(i, \twopt; \tilde\alpha) < 0)\} \ge \kappa$ means that there are at least $\topK$ points, $\twopt$, for which \\ \mbox{$\Delta_{i, \twopt}(\alpha) < 0 \, \forall \alpha \in \Simplex{K}$}, so $i$ cannot be in the top-$\topK$ set for any index model.  
\end{proof}

Proposition~\ref{prop:pred-gap-bound-IM} establishes a bound on the gap between the predicted values of any two points \textit{for all} $\alpha \in \Simplex{K}$ in terms of the prediction gap under \textit{any one} choice of combining parameters $\alpha$.  Corollary~\ref{cor:unflippable-IM} then allows us to determine when $\onept$ cannot be in the top-$\topK$ of \textit{any} index model $\alpha \in \Simplex{K}$ based on the prediction gap for a \textit{given} $\tilde \alpha$.  

\begin{proposition}[Prediction maximizing index model] The predicted value of point $i$ is maximized at $\alpha^* \in \Simplex{K}$ where $\alpha^*_{k^*} = 1$ for $k^* = \argmax_k \yhat{k}(x_i) $ and $\alpha^*_{k} = 0$ for $k \neq k^*$.  
\label{prop:max-yhat-IM}
\end{proposition}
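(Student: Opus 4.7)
The plan is to recognize that $\hat y_{IM}(x_i;\alpha) = \sum_{k=1}^K \alpha_k \yhat{k}(x_i)$ is a linear function of $\alpha$ restricted to the probability simplex $\Simplex{K}$, so maximizing it reduces to a standard exercise: any linear objective on the simplex is maximized at a vertex, and the vertex with the largest objective value is the one corresponding to the largest coefficient $\yhat{k}(x_i)$.

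Concretely, I would proceed in two short steps. First, observe that for any $\alpha \in \Simplex{K}$, the value $\hat y_{IM}(x_i;\alpha)$ is a convex combination of the numbers $\{\yhat{k}(x_i)\}_{k=1}^K$, so
\[
\hat y_{IM}(x_i;\alpha) = \sum_{k=1}^K \alpha_k \yhat{k}(x_i) \le \left(\sum_{k=1}^K \alpha_k\right) \max_{k} \yhat{k}(x_i) = \max_{k} \yhat{k}(x_i),
\]
using $\alpha_k \ge 0$ and $\sum_k \alpha_k = 1$. Second, verify that the proposed $\alpha^*$, which places all weight on $k^* = \argmax_k \yhat{k}(x_i)$, attains this upper bound: $\hat y_{IM}(x_i;\alpha^*) = \yhat{k^*}(x_i) = \max_k \yhat{k}(x_i)$. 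Combining these shows $\alpha^* \in \argmax_{\alpha \in \Simplex{K}} \hat y_{IM}(x_i;\alpha)$.

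There is essentially no obstacle here; the entire statement is the simplex-vertex characterization of maximizers of a linear functional, and the only thing worth noting is uniqueness (the maximizer is unique exactly when $\argmax_k \yhat{k}(x_i)$ is a singleton; otherwise any convex combination of the tied vertices is also optimal). I would flag this briefly at the end so the reader knows the stated $\alpha^*$ is one maximizer among possibly several in the tied case, but the value $\max_k \yhat{k}(x_i)$ is always the same.
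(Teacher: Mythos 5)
Your proof is correct and is essentially the same argument the paper gives: bound the convex combination $\sum_k \alpha_k \yhat{k}(x_i)$ above by $\max_k \yhat{k}(x_i)$ using $\alpha_k \ge 0$ and $\sum_k \alpha_k = 1$, then observe the bound is attained at the vertex $\alpha^*$. Your added remark about non-uniqueness in the case of ties is a small, accurate refinement the paper does not mention.
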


\begin{proof}
    \[
    \max_{\alpha \in \Simplex{K}} \hat y_{IM}^{(\alpha)}(x_i) 
    = \max_{\alpha \in \Simplex{K}} \sum_{k = 1}^K \alpha_k \yhat{k}
    \le \max_k \yhat{k}  \sum_{k = 1}^K \alpha_k = \max_k \yhat{k},
    \]
    which is achieved at the stated value of the combining parameter vector, $\alpha^*$.
\end{proof}

Proposition~\ref{prop:max-yhat-IM} provides a candidate $\alpha$ for which a given point may be in the index model's top-$\topK$.  Note that this result does not preclude the possibility that $Top_{(i, \alpha^*, \topK)} = 0$ while also $Top_{(i, \alpha', \topK)} = 1$ for some other $\alpha' \in \Simplex{K}$.  This result suggests the simple strategy of first identifying points whose top-$\topK$ decision varies between the single-target prediction models $\yhat{k}$.

\section{Dataset details}
\label{sec:dataset_details}
We remove several co-linear features from the original Obermeyer et al. dataset so that models can be fit with OLS regression (instead of regularized regression). Specifically, we remove features whose variable name matches the following regular expression:\footnote{See data dictionary for names and descriptions of variables here: \url{https://gitlab.com/labsysmed/dissecting-bias/-/blob/master/data/data_dictionary.md}}
\begin{quote}
  \begin{verbatim}
    (gagne_sum_tm1|normal_tm1|esr_.*-low_tm1|crp_(min|mean|max).*_tm1|ghba1c_.*-low_tm1)
  \end{verbatim}
\end{quote}
This eliminates the sum of active illnesses (which are listed as individual binary features in the dataset) as well as one-hot encoded indicators for individual test results that have low/normal/high levels.

For \textsection\ref{sec:index_model_intuition} we use a smaller subset of features, for computational efficiency, taking only features that match the following regular expression:
\begin{quote} 
  \begin{verbatim}
    (gagne_sum_tm1|hypertension_elixhauser_tm1|^dem_|cost.*tm1)
  \end{verbatim}
\end{quote}
This takes the count of total illnesses in the previous time period instead of individually coded illnesses along with demographics and cost in the previous time period.

\section{Maximizing fairness with multiple targets}
\label{sec:index_model_diagram}
\begin{figure}[H]
    \centering
    \includegraphics[width=\textwidth]{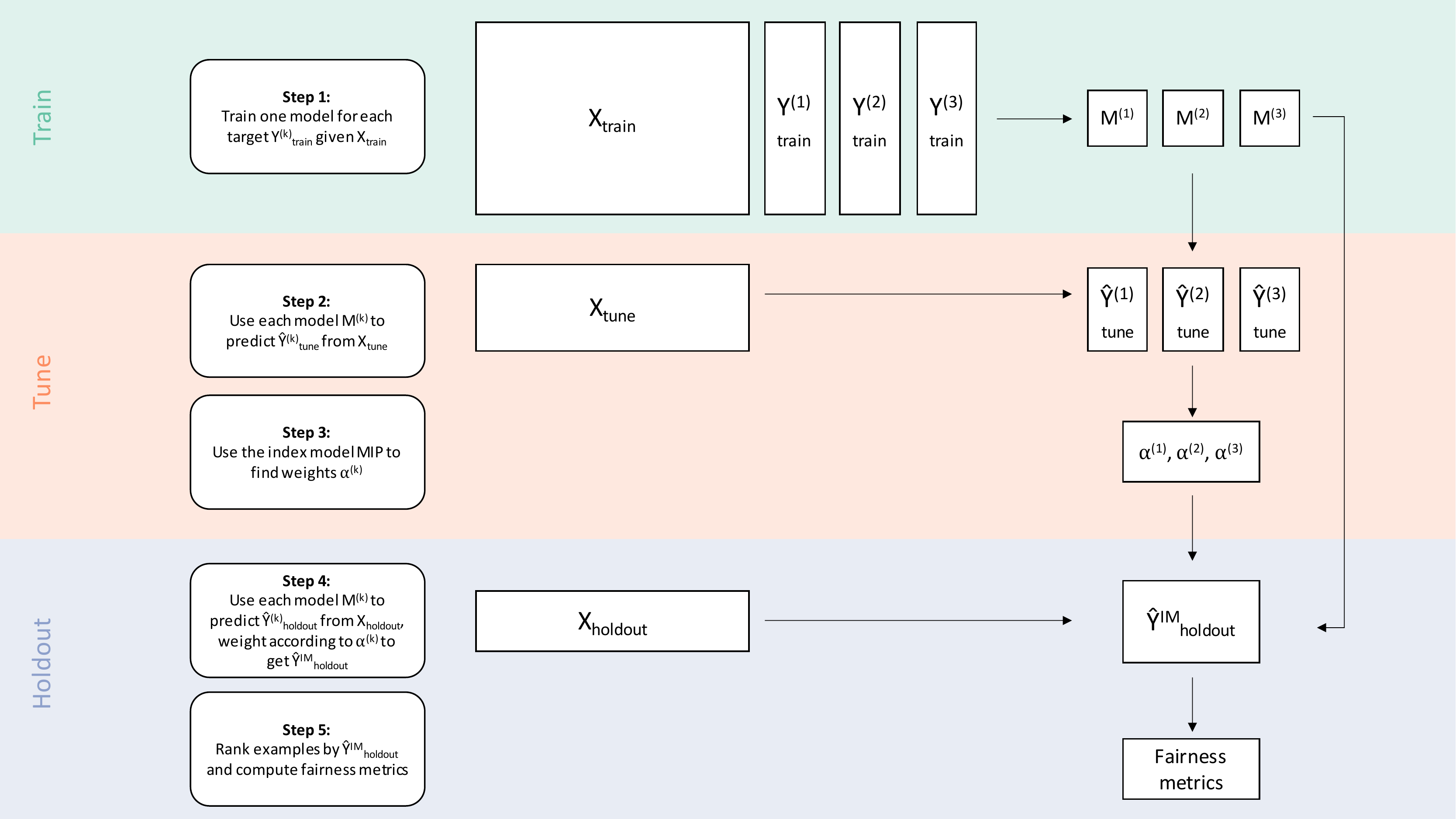}
    \caption{Workflow for maximizing fairness with multiple targets. Training data is used to fit separate models for each target variable. In the tune phase, the fitted models are used to forecast each target variable, and the index model MIP is run to find a fairness-maximizing weighted combination of the targets. Finally, in the holdout phase a separate dataset is used to calculate the weight index model predictions, from which fairness metrics are computed.}
    \label{fig:index_model_diagram}
\end{figure}

\end{document}